\documentclass{article}

\PassOptionsToPackage{numbers,compress}{natbib}
\usepackage[preprint]{neurips_2026}

\usepackage[utf8]{inputenc} 
\usepackage[T1]{fontenc}    
\usepackage{microtype}
\usepackage{graphicx}
\usepackage{subcaption}
\usepackage{booktabs}       
\usepackage{hyperref}
\hypersetup{
  pdftitle={GIST: Gauge-Invariant Spectral Transformers for Scalable Graph Neural Operators},
  pdfauthor={Mattia Rigotti, Nicholas Thumiger, Thomas Frick},
}
\usepackage{url}
\usepackage{nicefrac}

\usepackage{algorithm}
\usepackage{algpseudocode}

\usepackage{amsmath}
\usepackage{amssymb}
\usepackage{amsfonts}
\usepackage{mathtools}
\usepackage{amsthm}
\usepackage{dsfont}  

\usepackage{xcolor}
\usepackage{soul}   
\usepackage{array}  
\usepackage{enumitem}  

\theoremstyle{plain}
\newtheorem{theorem}{Theorem}[section]
\newtheorem{proposition}[theorem]{Proposition}
\newtheorem{lemma}[theorem]{Lemma}
\newtheorem{corollary}[theorem]{Corollary}
\newtheorem*{proposition*}{Proposition}
\theoremstyle{definition}

\theoremstyle{remark}
\newtheorem{remark}[theorem]{Remark}

\title{GIST: Gauge-Invariant Spectral Transformers for Scalable Graph Neural Operators}

\author{%
  Mattia Rigotti\textsuperscript{*,\textdagger} \quad
  Nicholas Thumiger\textsuperscript{*} \quad
  Thomas Frick\textsuperscript{*} \\
  IBM Research
}

\begin{document}

\maketitle
{%
  \renewcommand{\thefootnote}{\fnsymbol{footnote}}%
  \footnotetext[1]{Equal contribution.}%
  \footnotetext[2]{Corresponding author: M.\ Rigotti: \texttt{mrg@zurich.ibm.com}.}%
}

\begin{abstract}
Neural operators on irregular meshes face a fundamental tension. Spectral positional encodings, the natural choice for capturing geometry, require cubic-complexity eigendecomposition and inadvertently break gauge invariance through numerical solver artifacts; existing efficient approximations sacrifice gauge symmetry by design.
Both failure modes break discretization invariance: models fail to transfer across mesh resolutions of the same domain, and similarly across different graphs of related structure in inductive settings.
We propose GIST (Gauge-Invariant Spectral Transformer), a scalable neural operator that resolves this tension by restricting attention to pairwise inner products of efficient approximate spectral embeddings.
We prove these inner products estimate an exactly gauge-invariant graph kernel at end-to-end $\mathcal{O}(N)$ complexity, and establish a formal connection between gauge invariance and discretization-invariant learning with bounded mismatch error.
To our knowledge, GIST is the first scalable graph neural operator with a provable discretization-mismatch bound.
Empirically, GIST sets state-of-the-art on the AirfRANS, ShapeNet-Car, DrivAerNet, and DrivAerNet++ mesh benchmarks (up to 750K nodes), and additionally matches strong baselines on standard graph benchmarks (e.g., 99.50\% micro-F1 on PPI).
\end{abstract}

\section{Introduction}

Originally developed for natural language processing, Transformers \citep{vaswani2017attention} have since been adapted to data of increasing structural complexity, including sets \citep{lee2019set}, images \citep{dosovitskiy2021image}, and video \citep{bertasius2021is}. Recent work has extended them to graphs, capturing global relationships through self-attention and overcoming the long-range dependency limitations of message-passing GNNs \citep{dwivedi2022graph, zhu2023hierarchical}.

\paragraph{Two barriers to scalable graph neural operators.}
Adapting transformers to graphs introduces two distinct barriers that have not been simultaneously addressed.
First, a \emph{computational barrier}: exact spectral graph embeddings, while theoretically natural, require eigendecomposition of the graph Laplacian, scaling as $\mathcal{O}(N^3)$ in $N$, the number of nodes, which is prohibitively expensive at large scale.
Second, a \emph{gauge invariance barrier}: the inherent freedom to rotate eigenvectors, flip signs, or choose among degenerate eigenvectors is broken by exact methods through numerical solver artifacts (sign choices, eigenspace ordering, degeneracy handling, see \citealt{bronstein2017geometric}) and by approximate methods through their commitment to a specific basis.
A model trained on embeddings from one set of choices learns features that do not transfer, causing severe generalization loss in inductive learning where train and test involve different graph structures.
This becomes critical for neural operators, where different mesh resolutions of the same physical domain yield different Laplacians with distinct gauge choices, so spectral embeddings at different resolutions are incomparable, as are their attention kernels.
Gauge invariance, making representations independent of arbitrary spectral choices, is therefore essential for both robust inductive learning and for neural operators with provably bounded discretization error.

Existing approaches address one barrier at a time: (1) gauge-invariant spectral methods like SignNet \citep{lim2023signnet} and SPE \citep{huang2024stability} maintain gauge invariance but require full eigendecomposition; (2) approximate spectral methods achieve linear complexity but break gauge invariance; (3) generic linear transformers reduce attention complexity but ignore graph structure.
Recent sign- and basis-invariant methods recover gauge invariance through careful attention design but still require $\mathcal{O}(N^3)$ eigendecomposition or $\mathcal{O}(N^2)$ attention, limiting scalability.

\paragraph{Main contributions.}
We propose \emph{Gauge-Invariant Spectral Transformers (GIST)}, overcoming both barriers through a key insight: gauge invariance is preserved by restricting attention to inner products between spectral embeddings, which remain invariant under approximate spectral computations.
We show that for a broad class of efficient $\mathcal{O}(N)$ spectral embeddings $\widetilde\Phi = f(P)\cdot R$, where $f$ is any spectral filter and $R\in\mathbb{R}^{N\times r}$ is random with $\mathbb{E}[RR^\top]=\mathds{1}$, inner products are unbiased estimators of the graph kernel $K=f(P)f(P)^\top$, which depends only on $P$ and is therefore exactly gauge-invariant.
Restricting attention to these inner products and combining with linear attention yields end-to-end $\mathcal{O}(N)$ complexity, recovering gauge invariance exactly at the population level with statistical error $\mathcal{O}(r^{-1/2})$.
Our main contributions are:

\begin{enumerate}[leftmargin=*,itemsep=6pt,parsep=0pt,topsep=0pt]
\item establishing, to our knowledge, the first provable discretization-mismatch bound for a graph neural operator, by proving that gauge invariance guarantees discretization-invariant learning with bounded error;
\item proposing GIST, the first graph neural operator architecture that simultaneously achieves gauge invariance, end-to-end $\mathcal{O}(N)$ complexity, and global spectral expressivity, validated by SOTA results on mesh problems up to 750K nodes;
\item identifying gauge invariance breaking as the underlying failure mode shared by exact and approximate spectral methods, and demonstrating its impact on generalization in inductive settings.
\end{enumerate}

\section{Related works}
\label{sec:related}

\paragraph{Graph transformers.}
Graph transformers integrate structural information through positional encodings: shortest-path and centrality features (Graphormer \citep{ying2021do}), learnable structural encodings (LSPE \citep{dwivedi2022graph}), relative encodings (GRPE \citep{park2022grpe}), and full-Laplacian-spectrum encodings (SAN \citep{kreuzer2021rethinking}).
Spectral variants improve scalability through approximate bases (SpecFormer \citep{bo2023specformer}) or polynomial Laplacian filters (PolyFormer \citep{ma2024polyformer}) but forsake gauge invariance.
Hybrid local-global designs (GraphGPS \citep{rampavsek2022recipe}, Exphormer \citep{shirzad2023exphormer}) and tokenized approaches (TokenGT \citep{kim2022pure}, NAGphormer \citep{chen2023nagphormer}) address attention complexity, yet inherit positional-encoding overheads that grow super-linearly and none algorithmically preserve gauge invariance: GraphGPS, for instance, integrates spectral positional encodings but inherits their sign and basis ambiguities directly. Hierarchical Graph Transformer \citep{zhu2023hierarchical} reaches million-node scaling through coarsening at the cost of fine spectral information, while NodeFormer \citep{wu2022nodeformer} achieves linear-complexity attention via a kernelized Gumbel-Softmax that learns latent graph structure but operates without spectral encodings.
Consequently, these frameworks are rarely evaluated at the scale of \texttt{ogbn-arxiv}, where maintaining structural encodings is computationally prohibitive (GraphGPS, for instance, runs out of memory; see Table~\ref{tab:ppi}).

\paragraph{Sign- and basis-invariant positional encodings.}
A complementary line of work addresses gauge invariance at the architecture level: SignNet and BasisNet \citep{lim2023signnet} apply sign- and basis-equivariant networks to exact Laplacian eigenvectors, and SPE \citep{huang2024stability} extends this with stable, eigenvalue-dependent partitions of degenerate eigenspaces. These methods inherit the $\mathcal{O}(N^3)$ cost of full eigendecomposition, since they require explicit eigenvectors. GIST instead achieves invariance at the kernel level: attention depends only on gauge-invariant entries $K(i,j)=[f(P)f(P)^\top]_{ij}$, estimated by inner products of $\mathcal{O}(N)$ random spectral embeddings, eliminating eigendecomposition entirely. A non-spectral parallel paradigm is taken by PEARL \citep{kanatsoulis2025pearl}, which approximates eigenvector functions through message passing on random inputs combined with statistical pooling.

\paragraph{Scalable attention architectures.}

Recent advances tackled self-attention's quadratic scaling via cross-attention bottlenecks that map inputs to fixed-size latent representations or concepts \citep{jaegle2021perceiver,rigotti2022attentionbased}, kernel-based attention mechanisms using random feature approximations \citep{choromanski2020rethinking}, feature map decomposition methods that linearize the attention computation \citep{katharopoulos2020transformers}, and memory-efficient variants with sub-linear complexity \citep{likhosherstov2021sublinear}.
As noted by \citet{dao2024transformers}, many such linear transformer models are directly related to linear recurrent models such as state-space-models \citep{gu2021combining,gu2022efficiently, gu2023mamba, chennuruvankadara2024feature}.

\paragraph{Neural operators.}
Neural operators learn maps between continuous function spaces and ideally satisfy two properties: \emph{discretization invariance} (a single parameter set applies across mesh resolutions of the same continuum problem) and \emph{global integration} (representing nonlocal interactions via learned integral kernels) \citep{kovachki2023neural}.
Foundational families realize these differently: FNO parameterizes kernels in the spectral domain via FFT \citep{li2021fourier}, GNO via message passing on irregular meshes \citep{li2020neural}, and CNOs via continuous convolutions specified in the continuum \citep{raonic2023convolutional}.
Applying these methods to high-resolution meshes typically required downsampling, re-discretization onto regular lattices (e.g., SDF-based volumetric grids), or task-aware coarsening, since per-resolution graph sizes quickly become intractable.
Hybrid encoder-decoder designs handle complex geometries: GINO couples a graph encoder with a latent FNO on a proxy grid \citep{li2023geometryinformed}, DeepONet uses branch/trunk networks for arbitrary coordinate queries \citep{lu2021learning}, and U-NO adds a multi-resolution backbone \citep{rahman2022uno}.

\textbf{Transformers as neural operators.}
Self-attention is itself a learned, data-dependent kernel integral, naturally supporting discretization-invariant operator learning when combined with suitable positional features \citep{tsai2019transformer,yun2020are,lee2019set,jaegle2021perceiverio}. The Galerkin Transformer \citep{cao2021choose} formalized this interpretation, showing that softmax-free linear attention recovers a Petrov-Galerkin projection of an integral operator and giving attention a numerics-grounded reading as a learned discretized operator. Transolver casts PDE operator learning as attention from query coordinates to context tokens \citep{wu2024transolver}, while GNOT stabilizes operator training on irregular meshes via geometric normalization and gating \citep{hao2023gnot}.

\paragraph{Positioning GIST.}
Existing graph transformers and scalable attention methods address complementary but not simultaneous challenges.
Gauge-invariant spectral methods based on sign- and basis-invariant architectures applied to exact eigenvectors (SignNet \citep{lim2023signnet}, SPE \citep{huang2024stability}) maintain theoretical expressiveness but incur significant computational costs from full eigendecomposition.
Approximate spectral methods achieve better scalability but completely forsake gauge invariance, exacerbating generalization failures when arbitrary gauge choice differs between training and testing.
Generic linear transformers reduce attention complexity but typically ignore graph structure entirely.
GIST uniquely combines graph awareness through spectral embeddings, computational efficiency through random projections, gauge invariance through a modified attention mechanism, and linear attention for end-to-end linear scaling.
It also preserves the discretization-invariance and global integration properties needed for neural operator applications on mesh regression, unifying graph learning and continuous function approximation in a single framework.
Transformer-based operator architectures such as Transolver \citep{wu2024transolver}, GNOT \citep{hao2023gnot}, GINO \citep{li2023geometryinformed}, and the Galerkin Transformer \citep{cao2021choose} demonstrate resolution-agnosticism empirically; GIST complements this line of work by additionally providing a provable discretization-mismatch bound (Proposition~\ref{prop:neural_operator_full}).

\section{Approach}
\label{sec:approach}

\subsection{Preliminaries}

\paragraph{Self-attention and positional encoding.}
Given query, key and value representations $q_i$, $k_i$, $v_i$ of $N$ tokens with $i=1,\ldots,N$, self-attention \citep{vaswani2017attention} famously computes outputs as a weighted sum of values with attention weights determined by query-key similarities:
\begin{align}
	o_i = \sum_{j=1}^N \alpha_{ij} v_j, \quad \mbox{where } \alpha_{ij}=\mathrm{softmax}_j\left(\frac{q^\top_i k_j}{\sqrt{d}}\right).
	\label{eq:attention}
\end{align}
A key insight \citep{shaw2018selfattention} is that positional information can be injected through relative positional biases: $e_{ij} = \frac{q^\top_i k_j}{\sqrt{d}} + b_{ij}$, where $b_{ij}$ reflects distances between positions.
For graphs, this can be generalized by replacing $b_{ij}$ with distance measures reflecting the graph structure.

\paragraph{Graph Laplacian and spectral embeddings.}
The (normalized) \emph{graph Laplacian} ${{\mathcal{L}=\mathds{1}-D^{-\frac{1}{2}}A D^{-\frac{1}{2}}}}$ induces a natural metric via the \emph{resistance distance}: $\Omega(i,j)=(e_i - e_j)^\top\mathcal{L}^\dagger(e_i-e_j)$, where $e_i$ is the $i$th standard basis vector and $\mathcal{L}^\dagger$ is the Moore-Penrose pseudoinverse \citep{klein1993resistance}.
The resistance distance can be expressed via \emph{Laplacian eigenmaps}, which satisfy:
\begin{align}
	\Omega(i,j)=||\phi_i - \phi_j||^2 \quad \mbox{where} \quad (\phi_i)_k=\frac{1}{\sqrt{\lambda_k}}(u_k)_i,
	\label{eq:laplacian_eigenmaps}
\end{align}
with $\lambda_k, u_k$ being the eigenvalues and eigenvectors of $\mathcal{L}$. These eigenmaps are natural graph positional encodings \citep{dwivedi2021generalization}, but exact computation requires $\mathcal{O}(N^3)$ eigendecomposition, prohibitive for large graphs.

\paragraph{Approximate spectral embeddings and the gauge invariance problem.}
Note from \eqref{eq:laplacian_eigenmaps} that the graph metric is determined by pairwise inner products of eigenmaps, since $\Omega(i,j) = \langle\phi_i,\phi_i\rangle - 2\langle\phi_i,\phi_j\rangle + \langle\phi_j,\phi_j\rangle$.
This means we do not need the eigenvectors themselves, only an efficient way to estimate this kernel.
A broad class of randomized methods achieves this, computing embeddings as
\begin{align}
  \widetilde\Phi = f(P)\cdot R \;\in\mathbb{R}^{N\times r},
  \label{eq:spectral_embedding_class}
\end{align}
where $P$ is a graph operator (e.g., the random-walk matrix $D^{-1}A$ or the normalized Laplacian) with eigendecomposition $P=U\Lambda U^\top$, $f$ is a spectral filter ($f(P)=U\,\mathrm{diag}(f(\lambda_1),\ldots,f(\lambda_N))\,U^\top$), $R\in\mathbb{R}^{N\times r}$ is a random matrix with $\mathbb{E}[RR^\top]=\mathds{1}$, and $\tilde\phi_i\in\mathbb{R}^r$ is the $i$-th row of $\widetilde\Phi$.
Laplacian eigenmaps~\eqref{eq:laplacian_eigenmaps} are the special case $f(\lambda)=\lambda^{-1/2}$, $P=\mathcal{L}$; polynomial filters, rational filters, and random-walk diffusions also fit this template, with $f(P)\cdot R$ computable via repeated sparse matrix-vector products without forming $f(P)$ explicitly.

A key property shared by \emph{all} methods in this class is that their pairwise inner products are \emph{unbiased estimators} of the graph kernel $K(i,j):=[f(P)f(P)^\top]_{ij}$:
\begin{align}
  \mathbb{E}\!\left[\langle\tilde\phi_i,\tilde\phi_j\rangle\right] = K(i,j) = [f(P)f(P)^\top]_{ij},
  \label{eq:kernel_estimator}
\end{align}
where the expectation is over the draw of $R$ and follows from $\mathbb{E}[RR^\top]=\mathds{1}$; when $P$ is symmetric this reduces to $K(i,j) = \sum_l f(\lambda_l)^2\,(u_l)_i(u_l)_j$.
Crucially, $K(i,j)$ is a function of $P$ alone and therefore \emph{exactly} gauge-invariant: it is unchanged by sign flips, eigenspace rotations, and numerical solver choices (Proposition~\ref{prop:exact_gauge_inv}).
The estimation error is statistical and vanishes as $\mathcal{O}(1/\sqrt{r})$ (Lemma~\ref{lem:concentration}).
In our implementation we use FastRP \citep{chen2019fast}, a polynomial filter $f(P)=\sum_{i=1}^{k}P^i$ computed via sparse random projections and power iterations at $\mathcal{O}(Nrk)$ cost (see Algorithm~\ref{code:fastrp}), but any method in this class works equally well, e.g.\ graph diffusion kernels \citep{kondor2002diffusion}, Chebyshev polynomial filters \citep{hammond2011wavelets}, or randomized SVD-based embeddings \citep{halko2011finding}.

Importantly, exact spectral methods can also introduce gauge dependence through eigensolver artifacts (sign choices, eigenspace ordering), making gauge invariance a concern across both exact and approximate approaches \citep{bronstein2017geometric}.

\paragraph{Motivation for gauge-invariant operations.}
While the inner products of approximate spectral embeddings estimate a gauge-invariant kernel, the embeddings themselves depend on arbitrary choices: the random matrix $R$, eigensolver conventions, and eigenvalue ordering.
Any model operating on these embeddings directly, rather than on their inner products, will learn features that do not transfer across graphs, resulting in a generalization failure.
Our approach addresses this by using approximate spectral embeddings as positional encodings, but restricting the Transformer to operate only on their gauge-invariant inner products.

\subsection{Our approach: GIST}

The key insight is that spectral embeddings of the form $\widetilde\Phi = f(P)\cdot R$ produce inner products $\langle\tilde\phi_i,\tilde\phi_j\rangle$ that are unbiased estimators of the exactly gauge-invariant kernel $K(i,j)=[f(P)f(P)^\top]_{ij}$ (\eqref{eq:kernel_estimator}).
By restricting all operations to depend on the embeddings only through these inner products, gauge invariance is preserved exactly at the population level (with only statistical noise of order $\mathcal{O}(1/\sqrt{r})$) while maintaining computational efficiency.

\begin{figure}[ht]
	\begin{minipage}[c]{0.64\textwidth}
		\includegraphics[width=\linewidth]{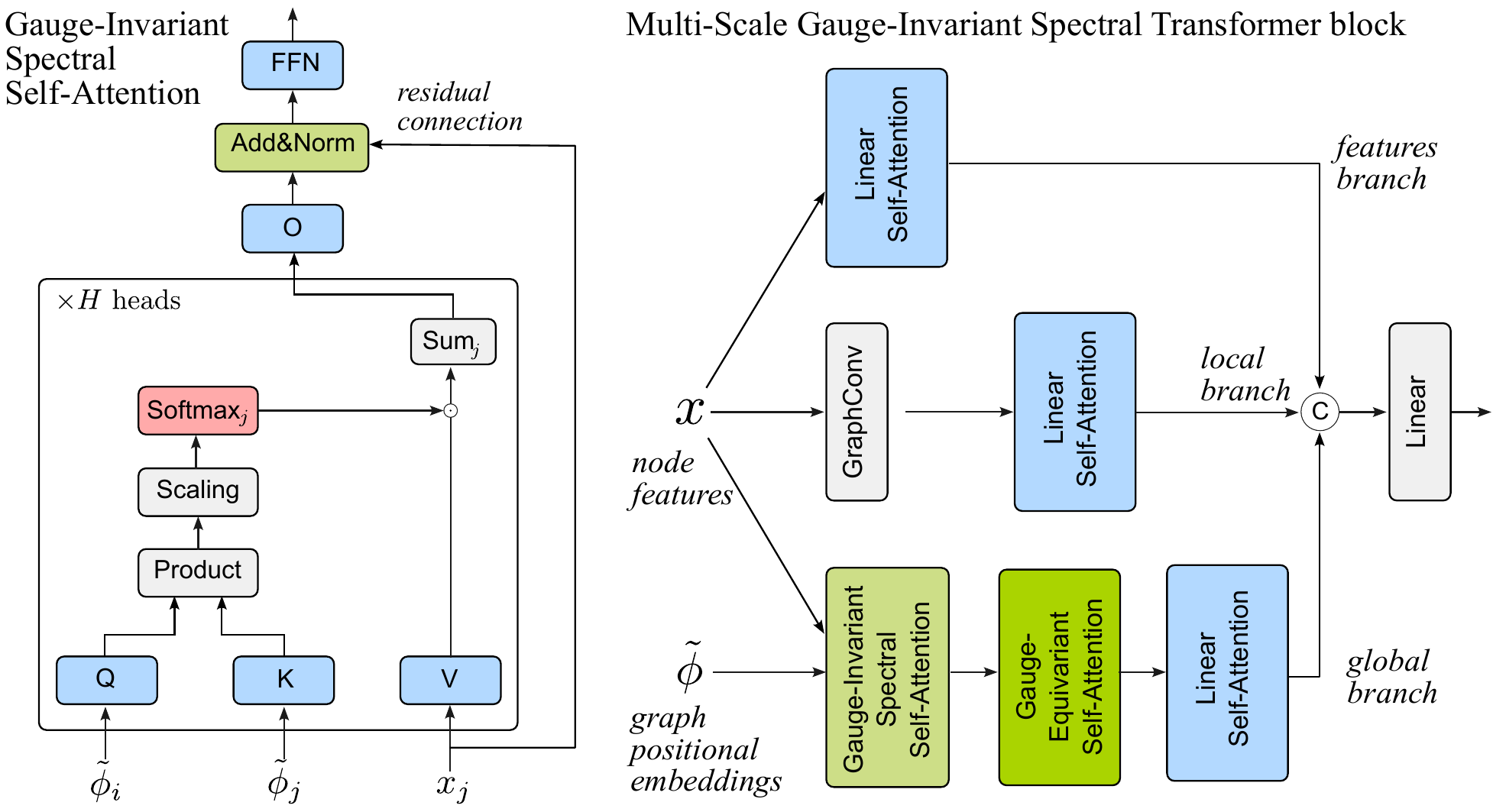}
	\end{minipage}\hfill
	\begin{minipage}[c]{0.33\textwidth}
		\caption{Gauge-Invariant Spectral Transformer. \textit{Left}: Gauge-Invariant Spectral Self-Attention uses spectral embeddings $\tilde\phi$ as queries and keys and node features $x$ as values; restricting $\tilde\phi$ to the query-key role preserves gauge invariance. Softmax shown for clarity; GIST uses linear attention in practice. \textit{Right}: the multi-scale GIST block combines three parallel branches inspired by EfficientViT.}
		\label{fig:diagram}
	\end{minipage}
\end{figure}

\paragraph{Gauge-Invariant Spectral Self-Attention.}
We now introduce our main contribution: \emph{Gauge-Invariant Spectral Transformer (GIST)}.
The first ingredient of GIST is \emph{Gauge-Invariant Spectral Self-Attention}, which operates on approximate spectral embeddings $\tilde\phi_i \in \mathbb{R}^r$ (the $i$-th row of $\widetilde\Phi=f(P)\cdot R$) but restricts attention computations to \emph{inner products} between embeddings.
The key observation is that while the individual embeddings $\tilde\phi_i$ depend on the random matrix $R$, their inner products $\langle\tilde\phi_i,\tilde\phi_j\rangle$ are unbiased estimators of the gauge-invariant kernel $K(i,j)$ (\eqref{eq:kernel_estimator}), which depends only on $P$. Attention weights computed from these inner products are therefore gauge-invariant and transfer across graphs.

Formally, for each node $i=1,\ldots,N$, Gauge-Invariant Spectral Self-Attention modifies the standard self-attention mechanism as follows:
\begin{align*}
	q_i=\tilde\phi_i,\quad k_i=\tilde\phi_i, \quad v_i=W_v x_i,
\end{align*}
where $W_v $ is a learned linear projection of the node features $x_i$, as usual in transformers.
This ensures attention logits are based on inner products:
	$e_{ij} = q^\top_i k_j = \tilde\phi^\top_i \tilde\phi_j = K(i,j) + \mathcal{O}(1/\sqrt{r})$,
where $K(i,j)=[f(P)f(P)^\top]_{ij}$ is the exactly gauge-invariant kernel from \eqref{eq:kernel_estimator} (see Fig.~\ref{fig:diagram}, left).
By limiting the embeddings to the query-key computation and not using them as values, we ensure that downstream layers (which operate on node features) cannot access gauge-dependent information.
Algorithm~\ref{code:fastrp} in Section~\ref{app:algorithms} gives our specific implementation of the spectral embedding computation (which uses FastRP), and Algorithm~\ref{code:gauge_invariant_attention} details the attention implementation. Notice that the softmax in \eqref{eq:attention} is replaced by linear attention in our implementation (see the \textit{Linear Self-Attention} paragraph below).

\paragraph{Gauge-Equivariant Spectral Self-Attention.}
The Gauge-Invariant Spectral Self-Attention operation thus preserves gauge invariance, but at the cost of giving up a lot of the flexibility of regular self-attention.
In particular, there is no mechanism for modifying the vectors $\tilde\phi_i$ through learning while preserving the property that attention logits track the gauge-invariant kernel $K$. Applying a learnable linear map $W$ to $\tilde\phi_i$, for instance, would replace $K$ with the parameter-dependent rescaling $\bigl(\mathrm{Tr}(W^\top W)/r\bigr) \cdot K$ in expectation.
A feature-dependent scalar rescaling $s(x_i)\tilde\phi_i$, in contrast, preserves the kernel structure as a feature-driven modulation: $(s(x_i)\tilde\phi_i)^\top (s(x_j)\tilde\phi_j)=s(x_i)s(x_j)(\tilde\phi_i^\top \tilde\phi_j) = s(x_i)s(x_j)\cdot K(i,j)+ \mathcal{O}(1/\sqrt{r})$, an \emph{equivariant} operation in node features.

Remarkably, such a gauge-equivariant operation can also be straight-forwardly implemented via a modification of self-attention by modifying \eqref{eq:attention} as follows:
\begin{align*}
	q_i=W_q x_i,\quad k_i=W_k x_i, \quad v_i=\tilde\phi_i,
\end{align*}
and the output in \eqref{eq:attention} such that it is constrained to operate on $\tilde\phi_i$, i.e.\
$\tilde\phi_i^{l+1} = \sum_{j=1}^N \alpha_{ij} v_j$, where $\tilde\phi_i^{l+1}$ indicates the graph positional encoding that will be used in the next layer $l+1$.
Algorithm~\ref{code:gauge_equivariant_attention} in Section~\ref{app:algorithms} details how the implementation of Gauge-Equivariant Spectral Self-Attention relates to regular Self-Attention.

Importantly, we omit learnable $W_q, W_k$ on the embeddings so that attention logits track the gauge-invariant kernel $K$: a generic learnable map of $\tilde\phi_i$ would change the kernel estimated by inner products.
Combining the two blocks recovers learnable query-key projections within a gauge-respecting representation: the gauge-invariant block exposes $K(i,j)$ as attention logits, while the gauge-equivariant block reinstates $W_q, W_k$ on node features with $\tilde\phi$ as values.

\paragraph{Linear self-attention, and multi-scale architecture.}
Gauge-Invariant Spectral Self-Attention ensures that we can compute reliable graph positional encoding with linear time complexity in the number of nodes in the graph $N$.
In order to maintain that linear scaling end-to-end, all GIST attention blocks replace softmax attention (introduced above and depicted in Fig.~\ref{fig:diagram}, left, for clarity) with the \emph{linear transformer} by \citet{katharopoulos2020transformers}, using a feature map $\varphi$ so that $\alpha_{ij}\propto\langle\varphi(q_i),\varphi(k_j)\rangle$.

Crucially, as feature map we use $\varphi(x)=\mathrm{ReLU}(x)$, which is known to correspond to a well-defined kernel (the first-order arc-cosine kernel \citep{cho2009kernel}). Gauge invariance of the attention weights is established at the sample level: for any fixed realization of $R$, $\tilde\phi_i=[f(P)R]_i$ is a deterministic function of the gauge-independent operator $P$ (no eigendecomposition is performed), so $\langle\varphi(\tilde\phi_i),\varphi(\tilde\phi_j)\rangle$ is exactly invariant under sign flips, eigenspace rotations, and solver choices. In expectation, the inner products estimate the gauge-invariant kernel $K(i,j)=[f(P)f(P)^\top]_{ij}$ (\eqref{eq:kernel_estimator}), connecting the population-level attention to $K$.
For further considerations on the choice of the feature map $\varphi(\cdot)$ see Appendix \ref{app:algorithms}.

To mitigate linear attention's drawbacks (notably lack of sharp attention scores compared to softmax attention), we design a parallel architecture inspired by EfficientViT \citep{cai2022efficientvit}, a multi-scale linear attention architecture.
Our \emph{Multi-Scale Gauge-Invariant Spectral Transformer Block} comprises three parallel branches: a feature branch (linear transformer on node features $x$), a local branch (graph convolution on $x$ followed by linear transformer), and a global branch (Gauge-Invariant Spectral Self-Attention followed by Gauge-Equivariant Spectral Self-Attention operating on both $x$ and $\tilde\phi$, then linear transformer).
The local branch emphasizes local information that linear attention would otherwise diffuse, while the global branch integrates global graph information.
See Figure~\ref{fig:diagram} (right panel) for the architecture.
Ablation studies in Appendix~\ref{sec:ablation} confirm that each branch meaningfully contributes to performance.

\paragraph{Complexity scaling analysis.}
GIST computational complexity is dominated by two components.
First, spectral embedding computation scales as $\mathcal{O}(N \cdot r \cdot k)$, where $N$ is the number of nodes, $r$ is the embedding dimension, and $k$ is the number of power iterations (e.g., $k$ polynomial filter steps for FastRP).
Second, linear transformer blocks combining Gauge-Invariant and Gauge-Equivariant Spectral Self-Attention scale as $\mathcal{O}(N \cdot d^2 + N \cdot r \cdot d)$ on $d$-dimensional node features and $r$-dimensional spectral embeddings, where the $N \cdot d^2$ term comes from the feature projections and feature-only linear attention, and the $N \cdot r \cdot d$ term from the attention steps that couple $r$-dim spectral embeddings with $d$-dim node features.
Overall, this gives end-to-end $\mathcal{O}(N \cdot d^2 + N \cdot r \cdot d + N \cdot r \cdot k)$ scaling, i.e.\ linear in $N$.
This contrasts with $\mathcal{O}(N^3)$ for exact eigendecomposition and $\mathcal{O}(N^2 d)$ for standard quadratic attention.
This linear scaling is empirically validated in Appendix~\ref{sec:scalability}: Figure~\ref{fig:gist_scalability} demonstrates that both VRAM consumption and forward pass time grow linearly with graph size up to 500K nodes on DrivAerNet samples, confirming our theoretical analysis and enabling the large-scale neural operator experiments in Section~\ref{sec:results}.

\section{Results}
\label{sec:results}

This section validates GIST both theoretically and empirically: a provable discretization-mismatch bound (\S\ref{sec:theory}), a controlled cross-resolution mechanism check (\S\ref{sec:disc_verification}), node classification on five standard graph benchmarks, and surface-pressure regression on four neural operator benchmarks (\S\ref{sec:neural_operators}).

\subsection{Theoretical guarantees: GIST as discretization-invariant neural operator}
\label{sec:theory}

We now formalize how gauge invariance enables discretization-invariant learning with bounded error.
Physical problems (computational fluid dynamics, structural mechanics, shape analysis) are discretized into computational meshes (i.e.\ graphs), with different resolutions producing different graph Laplacians and spectral decompositions involving arbitrary gauge choices (sign flips, eigenspace rotations, solver artifacts).
Without gauge invariance, parameters fail to transfer across discretizations; gauge invariance ensures attention weights converge to a well-defined continuum kernel with bounded discretization mismatch error.

\begin{proposition}[Informal formulation of Corollary~\ref{cor:output_mismatch}]
    \label{prop:neural_operator_informal}
    The output of a gauge-invariant spectral self-attention block of GIST, applied to two nested random samplings of the same $m$-dimensional manifold, converges across discretizations with error $\mathcal{O}(n^{-1/(m+4)}) + \mathcal{O}(r^{-1/2})$, where $n$ is the coarser resolution and $r$ is the spectral embedding dimension.
    This ensures learned parameters transfer across mesh resolutions with bounded error that vanishes as resolution and embedding dimension increase.
\end{proposition}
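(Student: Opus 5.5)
The plan is to split the output mismatch into a statistical term and a discretization term with a triangle inequality, and bound each with an existing tool. Fix a compact $m$-dimensional Riemannian manifold $\mathcal{M}$ with smooth density. Take nested i.i.d.\ samples $X_n\subset X_{n'}$ with $n\le n'$, and build $\varepsilon$-neighborhood (or kNN) graph operators $P_n, P_{n'}$ on them. For a node $x$ common to both samples, write the block output as
\[
o_n(x)=\sum_{j}\alpha^{(n)}(x,x_j)\,v_j,
\]
where the weights $\alpha^{(n)}$ are built from the inner products $\langle\tilde\phi_i,\tilde\phi_j\rangle$ (linear attention, normalized). Let $K_n$ denote the population kernel from \eqref{eq:kernel_estimator}, and let $K_{\mathcal{M}}=f(\Delta_{\mathcal{M}})^2$ be the continuum kernel of the Laplace--Beltrami operator. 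I will then bound
\[
|o_n(x)-o_{n'}(x)| \le |o_n-\bar o_n| + |\bar o_n-o_{\mathcal{M}}| + |o_{\mathcal{M}}-\bar o_{n'}| + |\bar o_{n'}-o_{n'}|,
\]
where $\bar o$ denotes the output computed with the exact kernel $K_n$ in place of the sampled inner products.

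\textbf{Statistical terms.} Lemma~\ref{lem:concentration} bounds each inner product as $|\langle\tilde\phi_i,\tilde\phi_j\rangle-K_n(i,j)|=\mathcal{O}(r^{-1/2})$ with high probability. The attention normalization is Lipschitz in the logits provided the normalizer is bounded below, so I will assume bounded values $v_j$ and a positive lower bound on the normalizer. Under these assumptions the error propagates linearly into $\mathcal{O}(r^{-1/2})$. Gauge invariance (Proposition~\ref{prop:exact_gauge_inv}) is what makes $\bar o_n$ well defined independently of any eigenbasis or solver choice. This matters for the comparison: the two resolutions are compared through $K_n(i,j)$, not through eigenvectors, so no sign or rotation alignment across resolutions is needed.

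\textbf{Discretization terms.} I will invoke standard spectral convergence of graph Laplacians to $\Delta_{\mathcal{M}}$ (Belkin--Niyogi, or García Trillos et al.). With the bandwidth chosen optimally, $\varepsilon\sim n^{-1/(m+4)}$, eigenvalues and eigenprojections converge at rate $\mathcal{O}(n^{-1/(m+4)})$. These are spectral projections onto eigenspaces, not individual eigenvectors, and this is exactly where gauge invariance is used again.

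For smooth $f$ with summable spectral weights, $K_n(x,y)$ converges to $K_{\mathcal{M}}(x,y)$ at the same rate. The argument is to truncate at $L$ eigenvalues and bound the tail by the decay of $f^2$. The Riemann sum $\frac1n\sum_j$ converges at $n^{-1/2}$, which is dominated by the kernel rate. Nestedness ensures that the query point $x$ and the values are shared between the two resolutions. Because $n\le n'$, the coarser term dominates, which gives the total
\[
\mathcal{O}(n^{-1/(m+4)})+\mathcal{O}(r^{-1/2}).
\]

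\textbf{Main obstacle.} I expect the hardest step to be the uniform, pointwise convergence of the filtered kernel $K_n\to K_{\mathcal{M}}$. Graph-Laplacian results control eigenvalues and $L^2$ eigenprojections, with constants that degrade with the spectral gap at index $L$, and $f$ (e.g., FastRP's polynomial in $P$) must be compatible with the normalization. My first attempt will be to work with functional calculus for bounded filters via resolvent or polynomial approximation: the operator-norm convergence $\|P_n-P_{\mathcal{M}}\|$ restricted to the sample then transfers directly to $\|f(P_n)-f(P_{\mathcal{M}})\|$ through Lipschitz functional calculus, which avoids eigenvector alignment entirely.
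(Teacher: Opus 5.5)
Your statistical step has a genuine gap, and it carries over into the discretization step. In GIST's gauge-invariant block the attention is linear attention with feature map $\varphi=\mathrm{ReLU}$ applied to $q_i=k_i=\tilde\phi_i$ (Algorithm~\ref{code:gauge_invariant_attention}). The raw weights are therefore $e_{ij}=\langle\varphi(\tilde\phi_i),\varphi(\tilde\phi_j)\rangle$, which are not a function of $\langle\tilde\phi_i,\tilde\phi_j\rangle$. As a result, your intermediate output $\bar o_n$, obtained by substituting $K_n(i,j)$, is not the $r\to\infty$ limit of $o_n$. Let $\mathbf{a},\mathbf{b}$ be the $i$-th and $j$-th rows of $f(P_n)$, let $\theta$ be their angle, and take Gaussian $R$. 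Then $e_{ij}$ concentrates around the first-order arc-cosine kernel
\[
\bar k_n(i,j)=\mathbb{E}_R[e_{ij}]=\frac{\|\mathbf{a}\|\|\mathbf{b}\|}{2\pi}\bigl(\sin\theta+(\pi-\theta)\cos\theta\bigr),
\]
and not around $K_n(i,j)=\|\mathbf{a}\|\|\mathbf{b}\|\cos\theta$. For orthogonal rows, for example, $K_n(i,j)=0$ while $\bar k_n(i,j)=\|\mathbf{a}\|\|\mathbf{b}\|/(2\pi)$. Hence $|o_n-\bar o_n|$ is generically an $\mathcal{O}(1)$ bias rather than $\mathcal{O}(r^{-1/2})$. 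Lemma~\ref{lem:concentration} controls the bilinear form $\mathbf{a}^\top RR^\top\mathbf{b}$ and says nothing about $e_{ij}$. Your reduction fits attention that sees $\tilde\phi$ only through raw inner products: softmax of $\tilde\phi_i^\top\tilde\phi_j$ (as drawn for clarity in Figure~\ref{fig:diagram}), or linear attention with the identity feature map. It does not fit the block in the statement.

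The missing idea is the one the paper builds into Corollary~\ref{cor:output_mismatch}: use $\bar k_n$, not $K_n$, as the intermediate object. Concentration of $e_{ij}$ around $\bar k_n$ at rate $\mathcal{O}(r^{-1/2})$ then follows from Bernstein's inequality, since each coordinate product $\mathrm{ReLU}(\mathbf{a}^\top\mathbf{r}_s)\,\mathrm{ReLU}(\mathbf{b}^\top\mathbf{r}_s)$ is sub-exponential because $|\mathrm{ReLU}(x)|\leq|x|$. The discretization rate is then transferred through a Lipschitz dependence of $\bar k$ on the Gram triple $(K_n(i,i),K_n(j,j),K_n(i,j))$, which is the paper's assumption~(f).

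\begin{itemize}
\item This Lipschitz dependence holds for rotation-invariant Gaussian $R$ when the row norms are bounded above and away from zero. It is not automatic for the very sparse projections FastRP uses.
\item It also requires convergence of the diagonal $K_n(i,i)$, hence a uniformly bounded kernel diagonal. That excludes $f(\lambda)=\lambda^{-1/2}$ for $m\geq 2$ (Remark~\ref{rem:cor_refinements}), so your hypothesis ``smooth $f$ with summable spectral weights'' must be strengthened accordingly.
\end{itemize}

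With $\bar k$ in place of $K$, your four-term telescoping through a continuum output is a sound skeleton. It is also more explicit than the paper's step~(2) about $o_i^n$ and $o_i^{n'}$ summing over different node sets. The paper instead compares weights pairwise at common nodes using $Z_i\geq N Z_*$, and defers the quadrature error to Remark~\ref{rem:cor_refinements}. Separately, your functional-calculus fallback for kernel convergence would need an operator-Lipschitz (not merely Lipschitz) filter, plus a common space in which $P_n$ and the continuum operator can be compared. The paper avoids this by arguing per eigenpair with a trace-class tail.
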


\textit{Proof sketch.}
Cross-discretization output mismatch decomposes additively into a discretization error and a statistical error. Graph Laplacians of $n$-point samples of an $m$-dimensional manifold converge spectrally to the Laplace-Beltrami operator at rate $\mathcal{O}(n^{-1/(m+4)})$ \citep{belkin2008towards, calder2022improved}, so any kernel $K=f(P)f(P)^\top$ from a smooth spectral filter converges across discretizations at the same rate (Corollary~\ref{cor:kernel_convergence}). Inner products of $\widetilde\Phi = f(P)\cdot R$ unbiasedly estimate $K$ with statistical error $\mathcal{O}(r^{-1/2})$ (Lemma~\ref{lem:concentration}), and $K$ depends only on $P$, hence is exactly gauge-invariant (Proposition~\ref{prop:exact_gauge_inv}). Since gauge-invariant attention depends on the embeddings only through these inner products, the triangle inequality yields an output-mismatch bound $\mathcal{O}(n^{-1/(m+4)}) + \mathcal{O}(r^{-1/2})$.
Full statements and proofs in Appendix~\ref{app:proofs}.

This result distinguishes GIST from prior spectral methods: while approximate methods achieve computational efficiency, they lack gauge invariance and thus cannot provide discretization error bounds; while exact methods could maintain invariance, their $\mathcal{O}(N^3)$ cost prevents scaling to high-resolution meshes where such bounds are most valuable.
GIST uniquely combines gauge invariance with end-to-end linear complexity, achieving both theoretical guarantees and practical scalability for neural operator applications.

\subsection{Discretization invariance: controlled mechanism check}
\label{sec:disc_verification}

To isolate the mechanism predicted by Proposition~\ref{prop:neural_operator_informal}, we construct a minimal setup in which gauge invariance is the only architectural difference between two matched models, leaving the broader empirical assessment of the full architecture to the operator-learning benchmarks of Section~\ref{sec:neural_operators}.

\begin{table}[h]
	\small
	\setlength{\tabcolsep}{4pt}
	\centering
	\caption{Controlled discretization-invariance check on a single DrivAerNet geometry: surface pressure trained on a 50\% decimated mesh and evaluated on the full-resolution mesh. The gauge-invariant architecture transfers across resolutions; the matched non-gauge-invariant baseline collapses, isolating gauge invariance as the responsible factor ($R^2$ as mean $\pm$ std over 6 seeds).}
	\label{tab:disc_invariance}
	\smallskip
	\begin{tabular}{lcc}
		\toprule
		\textbf{Architecture} & \textbf{Train $R^2$ (coarse mesh) $\uparrow$} & \textbf{Test $R^2$ (fine mesh) $\uparrow$} \\
		\midrule
		GIST (gauge-invariant) & 0.989{\tiny $\pm$0.012} &  \textbf{0.840}{\tiny $\pm$0.051} \\
		Non-gauge-invariant & 0.991{\tiny $\pm$0.010} & 0.526{\tiny $\pm$0.081} \\
		\bottomrule
	\end{tabular}
\end{table}

\paragraph{Experimental setup.}
We train two architectures with matched parameters on a single DrivAerNet \citep{elrefaie2024drivaernet} car at 50\% decimation (${\sim}$250K vertices) and test on full resolution (${\sim}$500K vertices) predicting surface pressure.
The \emph{gauge-invariant} architecture is a GIST block followed by a linear transformer block.
The \emph{non-gauge-invariant} baseline uses a linear transformer block with spectral embeddings summed to node features (treating them as standard positional encodings), which breaks gauge invariance by using the projected embeddings directly as features rather than restricting attention to their inner products.

\paragraph{Results.}
Table~\ref{tab:disc_invariance} shows both architectures achieve essentially the same training performance on the coarse mesh, confirming matched model capacity.
However, when evaluated on the fine mesh, the gauge-invariant architecture maintains strong performance (0.840 $R^2$, a 15\% relative drop), while the non-gauge-invariant baseline fails catastrophically (0.526 $R^2$, a 47\% relative drop).
Within the scope of this controlled comparison (one geometry, one decimation ratio), gauge invariance is the sole architectural factor that distinguishes successful cross-resolution transfer from catastrophic failure, consistent with the mechanism predicted by Proposition~\ref{prop:neural_operator_informal}.
Exact gauge-invariant methods (SignNet \citep{lim2023signnet}, SPE \citep{huang2024stability}) are gauge-invariant by construction and inherit the same cross-resolution transfer property. Their $\mathcal{O}(N^3)$ eigendecomposition, however, restricts them to the small-graph regime and does not apply at the mesh resolutions of Section~\ref{sec:neural_operators}, where the discretization-mismatch bound becomes consequential.

\subsection{Node classification tasks}

We evaluate GIST on five standard node-classification benchmarks: the Planetoid citation graphs Cora and PubMed \citep{sen2008collective,yang2016revisiting,kipf2017semisupervised}, the Amazon Photo co-purchase network and the ogbn-arxiv citation graph \citep{fey2019fast,fey2025pyg}, and the PPI protein interaction benchmark.
The first four are single-graph benchmarks where train, validation, and test are disjoint node subsets of one graph; PPI consists of 24 disjoint tissue-specific protein–protein interaction graphs and is the multi-graph case in this suite, where train and test involve different graph structures and gauge invariance becomes most consequential.
We follow the official splits in each case, reporting micro-averaged F1 on the multi-label PPI benchmark and accuracy elsewhere.
Spectral embedding hyperparameters ($k$ and $r$) are selected by HPO; Appendix~\ref{sec:hp_robustness} confirms robust performance across a wide range, with accuracy saturating at relatively low $r$, consistent with the $\mathcal{O}(r^{-1/2})$ concentration bound (Lemma~\ref{lem:concentration}).

\begin{table}[h]
	\footnotesize
	\setlength{\tabcolsep}{3pt}
	\centering
	\caption{Node classification on standard graph benchmarks. GIST is competitive across all five benchmarks at end-to-end $\mathcal{O}(N)$ complexity, achieving SOTA on Photo and matching SOTA on PPI. Cora, PubMed, Photo, and ArXiv report test accuracy; PPI reports micro-F1; all metrics are percentages and higher-is-better, reported as mean $\pm$ std across 6 random seeds for GIST. Sources are cited per model for baselines. $^*$source reports 95\% CI rescaled to std. $^\dagger$source does not specify the uncertainty type.}
	\label{tab:ppi}
	\smallskip
	\begin{tabular*}{\textwidth}{@{\extracolsep{\fill}} l l l l l l @{}}
		\toprule
		\textbf{Model} &
		\textbf{\begin{tabular}[t]{@{}c@{}}Cora\\{\scriptsize (accuracy $\uparrow$)}\end{tabular}} &
		\textbf{\begin{tabular}[t]{@{}c@{}}PubMed\\{\scriptsize (accuracy $\uparrow$)}\end{tabular}} &
		\textbf{\begin{tabular}[t]{@{}c@{}}Photo\\{\scriptsize (accuracy $\uparrow$)}\end{tabular}} &
		\textbf{\begin{tabular}[t]{@{}c@{}}ArXiv\\{\scriptsize (accuracy $\uparrow$)}\end{tabular}} &
		\textbf{\begin{tabular}[t]{@{}c@{}}PPI\\{\scriptsize (micro-F1 $\uparrow$)}\end{tabular}} \\
		\midrule
		GCN \citep{luo2024classic,zeng2020graphsaint,brody2022how,bo2023specformer}       & 85.10{\tiny $\pm$0.67} & 81.12{\tiny $\pm$0.52} & 88.26{\tiny $\pm$0.37$^*$} & 71.74{\tiny $\pm$0.29} & 51.50{\tiny $\pm$0.60$^\dagger$} \\
		GraphSAGE \citep{luo2024classic,velickovic2018graph,brody2022how}        & 83.88{\tiny $\pm$0.65} & 79.72{\tiny $\pm$0.50} & \phantom{00.}--\phantom{{\tiny $\pm$0.00}} & 71.49{\tiny $\pm$0.27} & 76.80\phantom{{\tiny $\pm$0.00}} \\
		GAT \citep{luo2024classic,velickovic2018graph,brody2022how,bo2023specformer}   & 84.46{\tiny $\pm$0.55} & 80.28{\tiny $\pm$0.64} & 90.94{\tiny $\pm$0.34$^*$} & 71.59{\tiny $\pm$0.38} & 97.30{\tiny $\pm$0.02} \\
		GATv2 \citep{choi2022personalized,brody2022how}            & 82.90\phantom{{\tiny $\pm$0.00}} & 78.70\phantom{{\tiny $\pm$0.00}} & \phantom{00.}--\phantom{{\tiny $\pm$0.00}} & 71.87{\tiny $\pm$0.25} & 96.30\phantom{{\tiny $\pm$0.00}} \\
		GaAN \citep{zhang2018gaan}  & \phantom{00.}--\phantom{{\tiny $\pm$0.00}} & \phantom{00.}--\phantom{{\tiny $\pm$0.00}} & \phantom{00.}--\phantom{{\tiny $\pm$0.00}} & \phantom{00.}--\phantom{{\tiny $\pm$0.00}} & 98.71\phantom{{\tiny $\pm$0.00}} \\
		Cluster-GCN \citep{chiang2019clustergcn}      & \phantom{00.}--\phantom{{\tiny $\pm$0.00}} & \phantom{00.}--\phantom{{\tiny $\pm$0.00}} & \phantom{00.}--\phantom{{\tiny $\pm$0.00}} & \phantom{00.}--\phantom{{\tiny $\pm$0.00}} & 99.36\phantom{{\tiny $\pm$0.00}} \\
		GraphSAINT \citep{zeng2020graphsaint}       & \phantom{00.}--\phantom{{\tiny $\pm$0.00}} & \phantom{00.}--\phantom{{\tiny $\pm$0.00}} & \phantom{00.}--\phantom{{\tiny $\pm$0.00}} & \phantom{00.}--\phantom{{\tiny $\pm$0.00}} & \textbf{99.50}{\tiny $\pm$0.00$^\dagger$} \\
		GCNII \citep{chen2020simple,bo2023specformer}  & \textbf{85.50}{\tiny $\pm$0.50} & 80.20{\tiny $\pm$0.40} & 89.94{\tiny $\pm$0.16$^*$} & 72.04{\tiny $\pm$0.10$^*$} & \textbf{99.53}{\tiny $\pm$0.01} \\
		GCNIII \citep{chen2025wide}           & \phantom{00.}--\phantom{{\tiny $\pm$0.00}} & \phantom{00.}--\phantom{{\tiny $\pm$0.00}} & \phantom{00.}--\phantom{{\tiny $\pm$0.00}} & \phantom{00.}--\phantom{{\tiny $\pm$0.00}} & \textbf{99.50}{\tiny $\pm$0.03} \\
		\midrule
		GraphGPS \citep{rampavsek2022recipe}       & \phantom{00.}--\phantom{{\tiny $\pm$0.00}} & \phantom{00.}--\phantom{{\tiny $\pm$0.00}} & \phantom{00.}--\phantom{{\tiny $\pm$0.00}} & OOM\phantom{{\tiny $\pm$0.00}} & \phantom{00.}--\phantom{{\tiny $\pm$0.00}} \\
		SGFormer \citep{luo2024classic,wu2023sgformer}       & 84.82{\tiny $\pm$0.85} & 80.60{\tiny $\pm$0.49} & \phantom{00.}--\phantom{{\tiny $\pm$0.00}} & \textbf{72.63}{\tiny $\pm$0.13} & \phantom{00.}--\phantom{{\tiny $\pm$0.00}} \\
		SpecFormer \citep{bo2023specformer}       & \phantom{00.}--\phantom{{\tiny $\pm$0.00}} & \phantom{00.}--\phantom{{\tiny $\pm$0.00}} & 95.48{\tiny $\pm$0.16$^*$} & 72.37{\tiny $\pm$0.09$^*$} & \phantom{00.}--\phantom{{\tiny $\pm$0.00}} \\
		PolyFormer \citep{ma2024polyformer}       & \phantom{00.}--\phantom{{\tiny $\pm$0.00}} & \phantom{00.}--\phantom{{\tiny $\pm$0.00}} & \phantom{00.}--\phantom{{\tiny $\pm$0.00}} & 72.42{\tiny $\pm$0.19} & \phantom{00.}--\phantom{{\tiny $\pm$0.00}} \\
		Exphormer (LapPE) \citep{shirzad2023exphormer} & \phantom{00.}--\phantom{{\tiny $\pm$0.00}} & \phantom{00.}--\phantom{{\tiny $\pm$0.00}} & 95.27{\tiny $\pm$0.42} & 72.44{\tiny $\pm$0.28$^\dagger$} & \phantom{00.}--\phantom{{\tiny $\pm$0.00}} \\
		\midrule
		\textbf{GIST (ours)} & 84.00{\tiny $\pm$0.60} & \textbf{81.20}{\tiny $\pm$0.41} & \textbf{95.61}{\tiny $\pm$0.23} & 72.12{\tiny $\pm$0.21} & \textbf{99.50}{\tiny $\pm$0.03} \\
		\bottomrule
	\end{tabular*}
\end{table}

GIST is competitive across all five benchmarks (Table~\ref{tab:ppi}), with three notable highlights: best mean accuracy on PubMed ($81.20\%\pm0.41$, narrowly above enhanced GCN variants and outperforming GAT/GraphSAGE families); state-of-the-art on Photo ($95.61\%\pm0.23$, surpassing SpecFormer's $95.48\%$); and matching SOTA on PPI ($99.50\%\pm0.03$ micro-F1, on par with GCNIII and within noise of GCNII's $99.53\%$).
On Cora and ogbn-arxiv, GIST stays within $1.5$ and $0.5$ points of the best reported method, at end-to-end $\mathcal{O}(N)$ complexity (GraphGPS goes OOM on ogbn-arxiv).

\subsection{Neural operators}
\label{sec:neural_operators}

We validate GIST's discretization-invariant properties (Section~\ref{sec:theory}) on four surface pressure prediction benchmarks of increasing scale, all involving node-level regression of aerodynamic fields on unseen geometries.

AirfRANS \citep{bonnet2022airfrans} is a high-fidelity computational fluid dynamics dataset of 2D airfoils under Reynolds-Averaged Navier-Stokes (RANS) simulation, comprising 1000 airfoil designs with approximately 200K mesh nodes per sample across Reynolds numbers 2--6M and angles of attack from $-5^\circ$ to $15^\circ$.
ShapeNet-Car \citep{chang2015shapenet} is a collection of 3D car geometries used for aerodynamic prediction, containing approximately 3700 mesh points per sample.
DrivAerNet and DrivAerNet++ \citep{elrefaie2024drivaernet} are distinct high-fidelity CFD datasets of parametric car geometries: DrivAerNet comprises 4{,}000 designs with approximately 500K surface vertices per car, while DrivAerNet++ extends this to over 10{,}000 designs with up to ${\sim}$750K vertices per car and greater geometric diversity across multiple vehicle classes (SUVs, sedans, hatchbacks).
All datasets model cars or airfoils as graphs with nodes representing surface mesh vertices and edges following mesh connectivity.

On the intermediate-scale benchmarks (AirfRANS and ShapeNet-Car, Table~\ref{tab:airfrans_shapenet}), GIST outperforms Transolver \citep{wu2024transolver}, achieving 0.0028 MSE on AirfRANS (vs.\ 0.0142) and 4.68 MSE on ShapeNet-Car (vs.\ 4.99). Both results follow the same train/test splits and surface-pressure MSE definition (per-node MSE over surface mesh vertices) as reported by Transolver, using the standard AirfRANS protocol of \citet{bonnet2022airfrans}.
On the large-scale benchmarks (DrivAerNet and DrivAerNet++, Table~\ref{tab:drivaernet_comparison}), GIST achieves state-of-the-art on both: 20.10\% relative $\ell_2$ error on DrivAerNet (vs.\ 20.35\% for the previous best) and 18.60\% on DrivAerNet++ (vs.\ 20.05\%), outperforming RegDGCNN \citep{elrefaie2024drivaernet}, Transolver \citep{wu2024transolver}, FigConvNet \citep{choy2025factorized}, and TripNet \citep{chen2025tripnet}.
GIST's linear scaling enables direct processing of these meshes (up to ${\sim}$750K vertices on DrivAerNet++) without downsampling, while prior methods require lossy projection to regular grids or lower-dimensional latent spaces.
GIST's spectral attention provides global receptive fields in a single layer, which aids performance on all these tasks.

\begin{table}[h]
\footnotesize
\begin{minipage}[t]{0.33\textwidth}
    \centering
    \captionof{table}{Surface pressure prediction error on AirfRANS and ShapeNet-Car. GIST outperforms Transolver on both, with a 5$\times$ reduction in MSE on AirfRANS. Baseline is Transolver \citep{wu2024transolver}. MSE is reported directly.}
    \label{tab:airfrans_shapenet}
    {\scriptsize
    \begin{tabular*}{\linewidth}{@{\extracolsep{\fill}} l cc @{}}
        \toprule
        \textbf{Model} & \textbf{AirfRANS $\downarrow$} & \shortstack{\textbf{ShapeNet-}\\\textbf{Car} $\downarrow$} \\
        \midrule
        Transolver & 0.0142 & 4.99 \\
        \textbf{GIST (ours)} & \textbf{0.0028} & \textbf{4.68} \\
        \bottomrule
    \end{tabular*}}
\end{minipage}\hfill
\begin{minipage}[t]{0.63\textwidth}
    \centering
    \captionof{table}{Surface pressure prediction error on DrivAerNet and DrivAerNet++. GIST sets state-of-the-art on both benchmarks, processing meshes directly at full resolution.}
    \label{tab:drivaernet_comparison}
    \begin{tabular*}{\linewidth}{@{\extracolsep{\fill}} l cc cc @{}}
        \toprule
        & \multicolumn{2}{c}{\textbf{DrivAerNet}} & \multicolumn{2}{c}{\textbf{DrivAerNet++}} \\
        \cmidrule(r){2-3} \cmidrule(l){4-5}
        \textbf{Model} & \textbf{\scriptsize MSE $\downarrow$} & \textbf{\scriptsize Rel L2 $\downarrow$} & \textbf{\scriptsize MSE $\downarrow$} & \textbf{\scriptsize Rel L2$\downarrow$} \\
        \midrule
        RegDGCNN        & $9.01\times 10^{-2}$ & 28.49\% & $8.29\times 10^{-2}$ & 27.72\% \\
        Transolver      & $5.37\times 10^{-2}$ & 22.52\% & $7.15\times 10^{-2}$ & 23.87\% \\
        FigConvNet      & $4.38\times 10^{-2}$ & 20.98\% & $4.99\times 10^{-2}$ & 20.86\% \\
        TripNet         & $4.23\times 10^{-2}$ & 20.35\% & $4.55\times 10^{-2}$ & 20.05\% \\
        \midrule
        \textbf{GIST (ours)} & $\mathbf{4.16 \times 10^{-2}}$ & \textbf{20.10\%} & $\mathbf{3.63 \times 10^{-2}}$ & \textbf{18.60\%} \\
        \bottomrule
    \end{tabular*}
\end{minipage}
\end{table}

\section{Conclusions}
\label{sec:conclusion}

We presented GIST, a gauge-invariant spectral transformer that sets state-of-the-art on four mesh-regression benchmarks at up to 750K nodes, with contributions spanning three levels.
First, we proved formally that gauge invariance guarantees discretization-invariant neural operators, establishing, to our knowledge, the first provable discretization-mismatch bound $\mathcal{O}(n^{-1/(m+4)}) + \mathcal{O}(r^{-1/2})$ for a graph neural operator.
Second, we proposed an architecture that achieves gauge invariance, end-to-end $\mathcal{O}(N)$ complexity, and global spectral expressivity simultaneously, three properties that no prior method combines.
Third, we identified gauge invariance breaking as the underlying failure mode shared by both exact and approximate spectral methods, causing severe generalization loss in inductive settings.
By restricting attention to inner products of spectral embeddings, which are unbiased estimators of an exactly gauge-invariant kernel, GIST recovers the symmetry algorithmically while maintaining linear scaling.
Empirically, GIST sets a new state-of-the-art on mesh regression across AirfRANS, ShapeNet-Car, DrivAerNet, and DrivAerNet++ (up to 500K and 750K nodes for the latter two), processing these meshes directly at full resolution where methods requiring quadratic attention or cubic eigendecomposition cannot scale; on standard graph benchmarks (Cora, PubMed, Photo, ogbn-arxiv, PPI), GIST achieves competitive results.

\paragraph{Broader impact, limitations, and future work.}
GIST is a methodological contribution to graph neural operators, with applications in mesh-based scientific computing such as CFD-based aerodynamic and structural simulation; we do not foresee direct negative societal impacts beyond those generic to scientific simulation tooling. On the theoretical side, the discretization-mismatch bound rests on the assumptions of \citet{calder2022improved}, namely i.i.d.\ uniform sampling on a bounded-curvature manifold, conditions that real meshes obtained by quasi-uniform refinement satisfy only approximately, and extending the bound to non-uniform sampling is a natural next step. Empirically, our scope spans homophilic graphs and physical meshes, leaving heterophilic settings for future investigation; the benefit of gauge invariance is also structurally limited on ogbn-arxiv by the use of a single fixed Laplacian, and emerges most clearly when training and evaluation involve distinct graphs, as is the case for PPI and the mesh benchmarks. We also precompute the spectral embedding under the assumption of a static input graph, and adapting GIST to dynamic graphs whose topology evolves over time, by recomputing or updating the embedding online, is left to future work.


\newpage

\bibliographystyle{unsrtnat}
\bibliography{bibliography}

\newpage
\appendix
\section{Proofs}
\label{app:proofs}

\subsection{Proof of Proposition~\ref{prop:neural_operator_full}: Gauge-Invariant Spectral Self-Attention}

We prove that the Gauge-Invariant Spectral Self-Attention mechanism is discretization-invariant with quantifiable discretization mismatch error through three stages of analysis. The proof establishes that the attention kernel $K(i,j)=[f(P)f(P)^\top]_{ij}$ estimated by the spectral embeddings converges to a well-defined continuum kernel as mesh resolution increases, allowing us to bound the discretization mismatch error between any two discretizations of the same manifold. We carry out the analysis on the rescaled Laplacian $\mathcal{L}_n$, for which sharp spectral-convergence rates are available \citep{calder2022improved}; the same conclusions extend to rescaled variants of $\mathcal{L}_n$, including the random-walk operator $P_n = D^{-1}A$ used in our FastRP implementation (Remark~\ref{rem:rescaled_variants} below).

For completeness, we restate the full proposition:

\begin{proposition}[Full Statement]
\label{prop:neural_operator_full}
Gauge-Invariant Spectral Self-Attention is a discretization-invariant Neural Operator with bounded discretization mismatch error.
Let $M$ be a compact $m$-dimensional Riemannian manifold and $\mathcal{G}_n$ a sequence of graphs obtained by sampling $n$ points $\{x_i\}_{i=1}^n\subset M$ as nodes, with $n\to\infty$.
Let $f$ be a spectral filter applied to a graph operator $P_n$ of $\mathcal{G}_n$, and let $\phi_i^n = f(P_n)_{i\cdot}^\top$ be the spectral embedding of node $i$ (with $\ker(P_n)$ projected out when $f$ has a pole at $0$).
Then:
\begin{enumerate}
    \item[(i)] \textbf{(Convergence to a continuum kernel.)} When $f(\lambda)=\lambda^{-1/2}$ and $P_n=\mathcal{L}_n$, the inner products $\langle \phi_i^n, \phi_j^n \rangle$ converge to the Green's function $G_M(x_i, x_j)$ of the Laplace-Beltrami operator at rate $\mathcal{O}(n^{-1/(m+4)})$ (Proposition~\ref{prop:stage1_spectral_convergence}). For spectral filters $f$ for which $g(\lambda):=f(\lambda)^2$ is Lipschitz away from $0$ and $\sum_l f(\mu_l)^2 < \infty$, the kernel $K_n(i,j)=\langle\phi_i^n,\phi_j^n\rangle = [f(P_n)f(P_n)^\top]_{ij}$ converges to a continuum kernel $K_\infty(x_i,x_j)$ at the same rate (Corollary~\ref{cor:kernel_convergence}).
    \item[(ii)] \textbf{(Discretization mismatch bound.)} For any two nested discretizations $\mathcal{G}_n\subseteq\mathcal{G}_{n'}$ of the same manifold $M$ with $n \leq n'$ (e.g., obtained by mesh decimation), the attention kernel values at common nodes $x_i, x_j\in\mathcal{G}_n$ satisfy
    $$\left|\langle \tilde{\phi}_i^n, \tilde{\phi}_j^n \rangle - \langle \tilde{\phi}_i^{n'}, \tilde{\phi}_j^{n'} \rangle\right| = \mathcal{O}\!\left(n^{-1/(m+4)}\right) + \mathcal{O}\!\left(r^{-1/2}\right),$$
    where $\tilde\phi_i^n$ is the $i$-th row of $\widetilde\Phi^n = f(P_n)\cdot R^n$. The two sources combine additively via the triangle inequality: the first is the discretization mismatch error, the second the statistical estimation error from the random embedding.
    \item[(iii)] \textbf{(Gauge invariance.)} GIST's attention kernel $K(i,j)=[f(P)f(P)^\top]_{ij}$ is exactly invariant under all gauge transformations of the eigenvector basis (sign flips, rotations within degenerate eigenspaces, solver choices). Combined with (ii), this ensures that learned parameters transfer across discretizations (discretization-invariance): the attention weights converge to a gauge-invariant continuum limit that does not depend on the arbitrary eigenvector conventions of any particular discretization.
\end{enumerate}
\end{proposition}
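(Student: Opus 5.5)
The proof splits into the three parts of the statement. Part (iii) is purely algebraic and I will do it first, since (ii) relies on it. For any symmetric $P$ with eigendecomposition $P=U\Lambda U^\top$, every gauge choice is a change $U\mapsto UQ$ with $Q$ orthogonal and commuting with $\Lambda$. That is, $Q$ is block-diagonal over the eigenspaces, which covers sign flips, rotations within degenerate eigenspaces, and any solver convention. It follows that $f(\Lambda)$ also commutes with $Q$, so
\[
UQ\,f(\Lambda)\,Q^\top U^\top = U f(\Lambda)U^\top = f(P).
\]
Hence $K=f(P)f(P)^\top$ depends on $P$ alone. The same computation applies with $\ker P$ projected out when $f$ has a pole at $0$, because the projector onto $\ker P$ is itself basis-independent. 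For the sample-level embeddings $\tilde\phi_i=[f(P)R]_i$, no eigenbasis enters at all.

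For part (i) I plan to invoke the spectral convergence result of \citet{calder2022improved}. With high probability, the eigenvalues $\lambda_l^n$ of $\mathcal{L}_n$ approximate the Laplace--Beltrami eigenvalues $\mu_l$, and the eigenvectors, suitably aligned within eigenspaces, approximate the eigenfunctions $\psi_l$ evaluated at the sample points. Both errors are $\mathcal{O}(n^{-1/(m+4)})$ for each fixed $l$. I then write the difference
\[
K_n(i,j)-K_\infty(x_i,x_j)=\sum_l g(\lambda^n_l)\,u_l(x_i)u_l(x_j)-\sum_l g(\mu_l)\,\psi_l(x_i)\psi_l(x_j),
\]
where the discrete eigenvectors are normalized so that they approximate the $\psi_l$. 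The alignment is legitimate exactly because of (iii): $K_n$ is gauge-invariant, so I may choose the eigenbasis of each eigenspace that best matches the continuum eigenfunctions. I split the sum at a spectral cutoff $L$.

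\begin{itemize}
\item For $l\le L$, I apply the Lipschitz property of $g$ away from $0$, together with the eigenvector bounds and a sup-norm bound on the $\psi_l$, to get a head error of roughly $C_L\,n^{-1/(m+4)}$.
\item For $l>L$, I use $\sum_l g(\mu_l)<\infty$ to bound the tail by a quantity that tends to $0$ as $L$ grows.
\end{itemize}

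The Green's function case $g(\lambda)=\lambda^{-1}$ is handled in the same way, using the known pointwise convergence of the graph Green's function.

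Part (ii) then follows by the triangle inequality through the continuum kernel and the population kernels:
\begin{align*}
\bigl|\langle\tilde\phi^n_i,\tilde\phi^n_j\rangle-\langle\tilde\phi^{n'}_i,\tilde\phi^{n'}_j\rangle\bigr|
&\le |\langle\tilde\phi^n_i,\tilde\phi^n_j\rangle-K_n(i,j)|+|K_n(i,j)-K_\infty(x_i,x_j)|\\
&\quad+|K_\infty(x_i,x_j)-K_{n'}(i,j)|+|K_{n'}(i,j)-\langle\tilde\phi^{n'}_i,\tilde\phi^{n'}_j\rangle|.
\end{align*}
The middle two terms are each $\mathcal{O}(n^{-1/(m+4)})$ by (i), using $n\le n'$. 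The outer two terms are $\mathcal{O}(r^{-1/2})$ by Lemma~\ref{lem:concentration}. Nestedness is used only to guarantee that $x_i,x_j$ are nodes of both graphs.

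The main obstacle is the cutoff argument in (i). The constants in \citet{calder2022improved} deteriorate with the eigen-index $l$ through spectral gaps and eigenfunction sup-norms, so controlling the head uniformly in $L$ requires a summability assumption on $g$ that is stronger than $\sum_l g(\mu_l)<\infty$. My first attempt will be to choose $L=L(n)$ growing slowly and to absorb the tail, accepting the stated rate up to these $L$-dependent constants. Alternatively, for the polynomial filters actually used (FastRP), I can bypass eigenvectors altogether by using operator-norm convergence of $\mathcal{L}_n$ to the Laplace--Beltrami operator applied to smooth functions, since $f(P_n)$ is then a finite polynomial.
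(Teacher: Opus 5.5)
Your decomposition is the paper's own. Part (iii) follows because $K=f(P)f(P)^\top$ depends on $P$ alone; your block-diagonal $Q$ computation spells out the paper's one-line argument. Part (ii) follows from the triangle inequality through the population kernels and $K_\infty$, with Lemma~\ref{lem:concentration} for the statistical terms. Part (i) comes from per-index spectral convergence from \citet{calder2022improved}, summed using the Lipschitz property of $g$ and $\sum_l g(\mu_l)<\infty$. Your use of (iii) to align eigenbases is a point the paper skips. It must, however, be done on whole clusters of discrete eigenvalues near a degenerate $\mu_l$; $g$ is not constant on such a cluster, and the Lipschitz bound absorbs the discrepancy.

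The obstacle you flag in (i) is a genuine gap, and the paper does not close it either: Corollary~\ref{cor:kernel_convergence} only asserts that the trace-class condition ``provides the uniform tail control'' for the full sum at rate $n^{-1/(m+4)}$. In the head, the per-index constants $C_l$ depend on $l$ through spectral gaps, multiplicities and the size of $\mu_l$. Also, because $K_n(i,j)$ evaluates eigenvectors at two specific nodes, you need $\ell^\infty$ eigenvector convergence, not the $L^2$ statement you cite; that is a separate input with its own $l$-dependence. In the tail, the discrete sum runs over all $n$ eigenpairs, most of which approximate nothing. The condition $\sum_l g(\mu_l)<\infty$ controls only the continuum tail; the discrete tail needs lower bounds on $\lambda^n_l$ for $l>L$ and pointwise control of high-index discrete eigenvectors. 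With $L=L(n)\to\infty$, optimizing $C_{L(n)}\,n^{-1/(m+4)}+\tau(L(n))$, where $\tau(L)$ bounds the tails, generally gives a rate strictly slower than $n^{-1/(m+4)}$. So ``accepting the stated rate up to $L$-dependent constants'' is not legitimate.

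A provable version needs stronger hypotheses, for example $\sum_l g(\mu_l)\,C_l<\infty$, or exponentially decaying $g$ such as heat kernels. In the heat-kernel case, Cauchy--Schwarz with $\sum_l (u^n_l)_i^2=1$ bounds the (rescaled, see below) discrete tail by $n\,g(\lambda^n_{L+1})$. This is negligible once $\lambda^n_{L+1}\gtrsim\log n$, and with $C_l$ controlled uniformly for $l\le L(n)$ you get the rate up to logarithmic factors.

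Three further points need repair.

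\emph{Green's function case.} It is not ``handled in the same way''. By Weyl's law $\mu_l\asymp l^{2/m}$, so $\sum_l\mu_l^{-1}=\infty$ for $m\ge 2$ and $G_M$ blows up on the diagonal. Off the diagonal its eigen-expansion is not absolutely convergent (already on the flat torus $\mathbb{T}^2$), so no head/tail split with absolute bounds can work. You need a dedicated off-diagonal argument, for example through the heat-kernel representation of $G_M$. The paper's ``termwise convergence'' in Proposition~\ref{prop:stage1_spectral_convergence}(b) has the same hole.

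\emph{Normalization.} Unit vectors $u^n_l\in\mathbb{R}^n$ satisfy $(u^n_l)_i\approx\psi_l(x_i)/\sqrt n$, with $\psi_l$ normalized in $L^2$ of the sampling measure. So once you renormalize the eigenvectors to approximate $\psi_l$, the left side of your display is $nK_n(i,j)$, not $K_n(i,j)=[f(P_n)f(P_n)^\top]_{ij}$. As written, the middle terms $|K_n(i,j)-K_\infty(x_i,x_j)|$ of your chain in (ii) are of order $|K_\infty(x_i,x_j)|$. You have two options:
\begin{enumerate}
\item Run (i) and (ii) for $\sqrt n\,\tilde\phi^n$; the concentration terms then need $\sup_n nK_n(i,i)<\infty$.
\item Keep the raw scale. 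Then every term in (ii) is $\mathcal{O}(1/n)$ and the bound holds trivially, but it is weaker than it looks, since $K_n(i,j)/K_{n'}(i,j)\approx n'/n$.
\end{enumerate}

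\emph{FastRP fallback.} It fails as stated. $K_n(i,j)=e_i^\top f(P_n)f(P_n)^\top e_j$ tests against indicator vectors, so consistency of $\mathcal{L}_n$ on smooth functions does not control it. Moreover, in the scaling of \citet{calder2022improved} one has $D^{-1}A=\mathds{1}-c\,\varepsilon_n^2\mathcal{L}_n$ up to degree normalization. A fixed-degree polynomial in $P_n$ is therefore an $n$-dependent filter of $\mathcal{L}_n$ that tends to the constant $f(1)=k$ on every fixed frequency band, not a trace-class filter of $\Delta_M$.
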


The proof proceeds in three stages corresponding to the key technical components.

\subsubsection{Stage 1: spectral convergence and Green's function}

\begin{proposition}\label{prop:stage1_spectral_convergence}
	Let $\mathcal{L}_n$ be the (appropriately rescaled) graph Laplacian of a graph $\mathcal{G}_n$ obtained by sampling $n$ nodes uniformly from a compact $m$-dimensional Riemannian manifold $M$, with the bandwidth normalization required by \citet{calder2022improved} so that $\mathcal{L}_n$ converges spectrally to the (unbounded) Laplace-Beltrami operator $\Delta_M$. Let $\lambda_k^n, u_k^n$ be eigenvalues and eigenvectors of $\mathcal{L}_n$ in this normalization, and $\mu_k, \psi_k$ the eigenvalues and eigenfunctions of $\Delta_M$ on $M$. Then:
	\begin{enumerate}
		\item[(a)] Spectral convergence: $|\lambda_k^n - \mu_k| = \mathcal{O}(n^{-1/(m+4)})$ and $\|u_k^n - \psi_k\|_{L^2} = \mathcal{O}(n^{-1/(m+4)})$ (up to log factors).
		\item[(b)] Green's function convergence: the discrete Green's function on $\mathcal{G}_n$ (the inner product of Laplacian eigenmaps, see \eqref{eq:laplacian_eigenmaps}) converges to the continuum Green's function:
		$$\sum_{\lambda_k^n > 0} \frac{1}{\lambda_k^n}(u_k^n)_i(u_k^n)_j = G_M(x_i, x_j) + \mathcal{O}(n^{-1/(m+4)}),$$
		where $G_M(x_i, x_j) = \sum_{k=1}^{\infty} \frac{1}{\mu_k}\psi_k(x_i)\psi_k(x_j)$ is the Green's function of $\Delta_M$ on $M$.
	\end{enumerate}
\end{proposition}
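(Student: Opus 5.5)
Part (a) will not be proved from scratch; it is imported from the graph-Laplacian spectral convergence literature. Under the stated sampling and bandwidth assumptions, \citet{calder2022improved} (refining \citet{belkin2008towards}) give, with high probability, $|\lambda_k^n-\mu_k|=\mathcal{O}(n^{-1/(m+4)})$ up to logarithmic factors. They also give the analogous bound for eigenvectors, measured in the discrete $L^2$ norm against the restriction of $\psi_k$ to the sample points. Two things need care here. First, I would state that for each fixed $k$ the constants depend on $k$ through the spectral gap $\min(\mu_k-\mu_{k-1},\mu_{k+1}-\mu_k)$. Second, when $\mu_k$ is degenerate, eigenvector convergence only holds after choosing the right gauge: there is an orthogonal alignment within the eigenspace. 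This is consistent with, and motivates, the later claim that only gauge-invariant quantities such as spectral projectors should be compared. So I would phrase (a) for the spectral projectors $\Pi_k^n=\sum_{\lambda_l^n\in I_k}u_l^n(u_l^n)^\top$ associated to each cluster $I_k$ of eigenvalues around $\mu_k$. I would then note that these projectors are what actually enter (b).

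For (b), I would split the discrete Green's function into a low-frequency part and a tail. For a cutoff $L$,
\begin{align*}
\sum_{\lambda_k^n>0}\frac{(u_k^n)_i(u_k^n)_j}{\lambda_k^n}-G_M(x_i,x_j)
&=\sum_{k\le L}\Bigl[\frac{(u_k^n)_i(u_k^n)_j}{\lambda_k^n}-\frac{\psi_k(x_i)\psi_k(x_j)}{\mu_k}\Bigr]
+\sum_{k>L}\frac{(u_k^n)_i(u_k^n)_j}{\lambda_k^n}-\sum_{k>L}\frac{\psi_k(x_i)\psi_k(x_j)}{\mu_k}.
\end{align*}
\begin{itemize}
\item \emph{Head.} The bracketed head terms are controlled by (a), after grouping degenerate clusters into projectors. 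This uses $1/\lambda_k^n-1/\mu_k=\mathcal{O}(|\lambda_k^n-\mu_k|/\mu_1^2)$, since $\mu_1>0$ is the first nonzero eigenvalue. It also needs pointwise rather than $L^2$ eigenvector control. For that I would invoke the $L^\infty$ eigenvector rates that \citet{calder2022improved} also provide at the same order up to logs, or upgrade $L^2$ to sup-norm via elliptic regularity of $\psi_k$.
\item \emph{Continuum tail.} This is a Weyl-law tail, $\sum_{k>L}\mu_k^{-1}\|\psi_k\|_\infty^2$.
\item \emph{Discrete tail.} This is bounded by the corresponding discrete quantity.
\end{itemize}

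The main obstacle is that for $m\ge 2$ the Green's function is singular on the diagonal, so the continuum tail does not even converge pointwise at $x_i=x_j$. A uniform rate for all $i,j$ is therefore false as literally stated. I would handle this in one of two ways. The first is to restrict (b) to off-diagonal pairs with $d(x_i,x_j)\ge\delta$, or to $m=1$. The second, which I prefer since it matches the paper's Corollary~\ref{cor:kernel_convergence} hypothesis $\sum_l f(\mu_l)^2<\infty$, is to interpret $G_M$ as the regularized Green's function, bandwidth-smoothed at the graph scale. Off the diagonal, I would control both tails using heat-kernel or resolvent smoothing. Concretely, I would write $1/\lambda=\int_0^\infty e^{-t\lambda}\,dt$ and split the integral at a small time $t_0$: large $t$ converges by the head argument, and small $t$ is uniformly small off-diagonal by Gaussian heat-kernel bounds. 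The discrete heat kernel obeys matching bounds at scales above the graph bandwidth.

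Finally, I would choose $L$, equivalently $t_0$, as a function of $n$ so that the truncation error matches $n^{-1/(m+4)}$. This balances the $k$-dependence of the constants in (a) against the tail decay, and yields the claimed rate up to logarithmic factors.
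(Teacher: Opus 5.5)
You take a different and more careful route than the paper. Both of you import (a) from \citet{calder2022improved}. For (b), however, the paper's entire argument is ``termwise convergence of the eigenfunction expansion of the pseudoinverse.'' You instead split off a tail and control it via $1/\lambda=\int_0^\infty e^{-t\lambda}\,dt$ and off-diagonal heat-kernel bounds.

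Your diagnosis of the paper's step is correct:
\begin{itemize}
\item By Weyl's law $\mu_k\asymp k^{2/m}$, so $\sum_k 1/\mu_k=\infty$ for every $m\ge2$. There is then no uniform tail bound with which to sum per-term rates whose constants grow with $k$. What fails for $f(\lambda)=\lambda^{-1/2}$ is the trace-class hypothesis of Corollary~\ref{cor:kernel_convergence}, not the Lipschitz one, since $1/\lambda$ is Lipschitz on $[\mu_1,\infty)$.
\item Off the diagonal, the series is not absolutely convergent in general. It converges only through cancellation, which your heat-kernel representation captures.
\item Part (a) controls $L^2$ errors, while (b) needs nodal values.
\item On the diagonal, $G_M$ is infinite for $m\ge2$, as Remark~\ref{rem:cor_refinements}(i) concedes.
\end{itemize}
So the paper's argument does not establish (b) as stated, and your off-diagonal version ($d(x_i,x_j)\ge\delta$, constants depending on $\delta$) is the provable statement. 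With $t_0\sim\delta^2/\log n$, only polylogarithmically many eigenpairs enter the head. The rate therefore survives up to logarithms, provided the constants in (a) grow at most polynomially in $k$.

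There are three corrections to your sketch.

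\textbf{The diagonal fails in every dimension.} Your alternative ``or to $m=1$'' is wrong, and so is the graph-scale regularized target, for a purely discrete reason. For off-diagonal entries to approach $G_M$, the eigenvectors must be normalized in the empirical $L^2$ norm. (With the $\ell^2$-unit vectors of \eqref{eq:laplacian_eigenmaps}, the left-hand side is $1/n$ times this and tends to $0$.) Then the left-hand side at $i=j$ is $n(\mathcal{L}_n^\dagger)_{ii}$. Let $v$ be the projection of $e_i$ off $\ker\mathcal{L}_n$. Cauchy--Schwarz gives $n(\mathcal{L}_n^\dagger)_{ii}\ge n\|v\|^4/(\mathcal{L}_n)_{ii}\asymp n\varepsilon_n^2$, because $(\mathcal{L}_n)_{ii}\asymp\varepsilon_n^{-2}$ in the rescaling where $\lambda_k^n\to\mu_k$. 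At the bandwidth $\varepsilon_n\sim(\log n/n)^{1/(m+4)}$ that yields the claimed rate, this is of order $n^{(m+2)/(m+4)}\to\infty$. That holds even for $m=1$, where $G_M(x,x)$ is finite. It is a self-interaction term coming from the $\sim n$ graph eigenvalues of size $\asymp\varepsilon_n^{-2}$. No smoothing of $G_M$ at scale $\varepsilon_n$ can reproduce it, because it depends on $n$ and not only on $\varepsilon_n$.

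\textbf{Nodal control needs discrete regularity.} Elliptic regularity of $\psi_k$ cannot upgrade $L^2$ convergence to nodal control. The error $\bigl((u_k^n)_i-\psi_k(x_i)\bigr)_i$ is a discrete vector, and an empirical-$L^2$ bound controls its sup norm only up to a factor $\sqrt n$. You need a regularity estimate for the graph eigenvectors themselves, such as the graph Lipschitz estimates of Calder, Garc\'{\i}a Trillos and Lewicka.

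\textbf{The discrete heat-kernel bounds must be cited or proved.} You assert them, but they are real inputs:
\begin{itemize}
\item off-diagonal decay of $n(e^{-t\mathcal{L}_n})_{ij}$ for $t\le t_0$, for instance via finite propagation at range $\varepsilon_n$ plus a Davies--Gaffney-type argument;
\item an on-diagonal bound $n(e^{-t\mathcal{L}_n})_{ii}\lesssim t^{-m/2}$ above the graph scale. This is what actually controls the discrete tail beyond your cutoff, including that high-frequency bulk.
\end{itemize}
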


\begin{proof}
	Part (a) follows from the spectral convergence theory for graph Laplacians on manifolds \citep{calder2022improved}: with appropriate graph construction, both eigenvalues and eigenvectors of $\mathcal{L}_n$ converge to those of $\Delta_M$ at rate $n^{-1/(m+4)}$ (up to log factors).
	Part (b) follows from part (a) by termwise convergence of the eigenfunction expansion of the pseudoinverse of $\mathcal{L}_n$.
\end{proof}

\begin{corollary}[Kernel convergence for general spectral filters]
\label{cor:kernel_convergence}
Let $\mu_1>0$ be the spectral gap of $\Delta_M$, and let $f$ be a spectral filter satisfying (a) $g(\lambda):=f(\lambda)^2$ is Lipschitz on $[\mu_1,\infty)$ with Lipschitz constant $L_g$, and (b) $\sum_l f(\mu_l)^2 < \infty$ (trace-class). Then the discrete kernel
$$K_n(i,j):=\sum_{\lambda_l^n>0} f(\lambda_l^n)^2\,(u_l^n)_i(u_l^n)_j$$
(projecting out the kernel of $\mathcal{L}_n$; equivalent to $[f(P_n)f(P_n)^\top]_{ij}$ when $f$ has no pole at $0$) converges to the continuum kernel $K_\infty(x_i,x_j):=\sum_l f(\mu_l)^2\,\psi_l(x_i)\psi_l(x_j)$ at rate $\mathcal{O}(n^{-1/(m+4)})$.
\end{corollary}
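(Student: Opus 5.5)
The plan is to run the same termwise argument used for Proposition~\ref{prop:stage1_spectral_convergence}(b), replacing $1/\lambda$ by $g(\lambda)=f(\lambda)^2$, and to control the infinite tail using the trace-class condition (b). Fix nodes $x_i,x_j$ and a truncation level $L$. Split the difference into a head and two tails:
\begin{align*}
K_n(i,j)-K_\infty(x_i,x_j) &= \sum_{l\le L}\bigl[g(\lambda_l^n)(u_l^n)_i(u_l^n)_j - g(\mu_l)\psi_l(x_i)\psi_l(x_j)\bigr] \\
&\quad + \sum_{l>L,\ \lambda_l^n>0} g(\lambda_l^n)(u_l^n)_i(u_l^n)_j - \sum_{l>L} g(\mu_l)\psi_l(x_i)\psi_l(x_j).
\end{align*}
Throughout, eigenvectors are normalized so that $(u_l^n)_i$ is comparable to $\psi_l(x_i)$. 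This means rescaling by $\sqrt n$ and sign/rotation-aligning within eigenspaces. By Proposition~\ref{prop:exact_gauge_inv}, $K_n$ does not depend on these choices, so aligning each eigenspace with the continuum one is free. This is where gauge invariance enters the argument.

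\textbf{Head.} For each $l\le L$, write the summand difference as
$$[g(\lambda_l^n)-g(\mu_l)](u_l^n)_i(u_l^n)_j + g(\mu_l)\bigl[(u_l^n)_i(u_l^n)_j-\psi_l(x_i)\psi_l(x_j)\bigr].$$
For the first piece, once $n$ is large enough that $\lambda_l^n\ge\mu_1/2$, the Lipschitz condition (a) and Proposition~\ref{prop:stage1_spectral_convergence}(a) give $|g(\lambda_l^n)-g(\mu_l)|\le L_g|\lambda_l^n-\mu_l|=\mathcal{O}(n^{-1/(m+4)})$. For the second piece, I will use an eigenvector convergence estimate in the pointwise/$\ell^\infty$ sense; the uniform version in \citet{calder2022improved} has the same rate up to logs. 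Combined with a uniform bound on $\|\psi_l\|_\infty$ for $l\le L$, this gives $\mathcal{O}(n^{-1/(m+4)})$ per term. For fixed $L$ the head is therefore $C_L\, n^{-1/(m+4)}$.

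\textbf{Tails.} The continuum tail is bounded by $\sum_{l>L} g(\mu_l)\|\psi_l\|_\infty^2$. Using Hörmander/Weyl bounds $\|\psi_l\|_\infty\lesssim \mu_l^{(m-1)/4}$, this is made small by the summability in (b); I expect to strengthen (b) slightly, to summability weighted by $\|\psi_l\|_\infty^2$, or else to interpret the trace-class hypothesis that way. For the discrete tail, use $\lambda_l^n\gtrsim\mu_l$ up to constants (a discrete Weyl-type comparison). Then $g(\lambda_l^n)\lesssim g(\mu_l)$ when $g$ is monotone decreasing, as for typical low-pass filters. In the general Lipschitz case, instead use $g(\lambda_l^n)\le g(\mu_l)+L_g|\lambda_l^n-\mu_l|$. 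The discrete diagonal entries $(u_l^n)_i^2$ can also be controlled by the Cauchy–Schwarz bound $|\sum_l g(\lambda_l^n)(u_l^n)_i(u_l^n)_j|\le K_n(i,i)^{1/2}K_n(j,j)^{1/2}$ restricted to $l>L$.

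\textbf{Balancing $L$ against $n$.} Choosing $L=L(n)$ to balance $C_L n^{-1/(m+4)}$ against the tail gives the stated rate, provided the tail decays fast enough that the per-term constants $C_L$ remain bounded. Keeping the constants uniform in $L$ is the main obstacle: the Calder–García Trillos rates hold for a fixed number of eigenpairs, and their constants degrade with $l$, largely through shrinking spectral gaps. My first attempt will be to absorb this into the filter assumption, requiring $g$ to decay fast enough, e.g.\ polynomial filters composed with a heat-type decay, that $\sum_l g(\mu_l)\,C_l<\infty$. The rate then follows with a filter-dependent constant. Failing that, I would state the result as convergence at rate $n^{-1/(m+4)}$ for the truncated kernel, plus a tail term vanishing as $L\to\infty$.
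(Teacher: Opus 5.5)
Your head/tail scheme is the paper's proof made explicit. The paper also combines hypothesis (a) with the eigenvalue rate and eigenvector convergence termwise. It invokes only the $L^2$ eigenvector rate, which, as you note, does not control the entries $(u^n_l)_i$, so your $\ell^\infty$ input is the right one. It then asserts in one sentence that the trace-class condition (b) ``provides the uniform tail control'' needed to pass to the full sum. You correctly locate the difficulty there, but your treatment of the discrete tail fails, and the obstruction is not just non-uniform constants. $K_n$ sums over all $n-1$ positive eigenpairs of $\mathcal{L}_n$. Spectral convergence only reaches eigenpairs with $\varepsilon_n^2\mu_l$ small, where $\varepsilon_n\sim n^{-1/(m+4)}$ (up to logs) is the bandwidth that produces the rate. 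The rescaled graph Laplacian has $\lambda^n_{\max}\le C\varepsilon_n^{-2}$. So the comparison $\lambda^n_l\gtrsim\mu_l$ is false once $\mu_l\gg\varepsilon_n^{-2}$, i.e.\ for all but $O(\varepsilon_n^{-m})=o(n)$ indices. For the same indices, the expansion $g(\lambda^n_l)\le g(\mu_l)+L_g|\lambda^n_l-\mu_l|$ carries an error of order $L_g\mu_l$. Your Cauchy--Schwarz step only reduces matters to the diagonal tail, and that tail is genuinely large. In the $\sqrt n$ normalization you rightly adopt, orthogonality gives $\sum_l (u^n_l)_i^2=n$, so for decreasing $g$
\[
K_n(i,i)\;\gtrsim\;n\,g\bigl(C\varepsilon_n^{-2}\bigr).
\]
Take $g(\lambda)=\lambda^{-s}$ with $m/2<s<(m+4)/2$, which is Lipschitz on $[\mu_1,\infty)$ and trace-class. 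Then this lower bound grows like $n^{1-2s/(m+4)}$, while $K_\infty(x,x)=\sum_l\mu_l^{-s}\psi_l(x)^2$ stays bounded by the local Weyl law (e.g.\ $m=1$, $s=1$: the Green's function of a circle). So hypotheses (a)--(b) alone cannot give the claimed rate, at least for $i=j$, which the statement includes. The paper's one-line appeal to (b) has the same hole.

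Strengthening the hypotheses is therefore unavoidable. What is needed is control at the top of the graph spectrum, not weighted summability over the continuum spectrum; for monotone $g$ the continuum tail is already handled by (b) through the local Weyl law. Use your Weyl-type comparison only at the truncation index, $\lambda^n_{L+1}\ge c\,\mu_{L+1}$, which holds while $\varepsilon_n^2\mu_{L+1}$ is small. Since the eigenvalues are ordered, Cauchy--Schwarz then bounds the entire discrete tail by $n\sup_{\lambda\ge c\mu_{L+1}}g(\lambda)$. So you need a condition like $n\sup_{\lambda\ge c\Lambda_n}g(\lambda)=O(n^{-1/(m+4)})$, for a cutoff $\Lambda_n$ with $\varepsilon_n^2\Lambda_n\to 0$ up to which the eigenpairs converge with controlled constants. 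Heat filters $e^{-t\lambda}$ meet this with $\Lambda_n\sim\log n$, which is why your heat-type example works. By contrast, $\lambda^{-s}$ needs at least $s\ge(m+5)/2$.

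In the head you also need decay of $g'$ rather than the global constant $L_g$. The reason is that $L_g\sum_{l\le L}|\lambda^n_l-\mu_l|\,\|\psi_l\|_\infty^2$ grows with $L$ and blocks the balancing of $L$ against $n$. Your fallback holds only for the truncated discrete kernel $\sum_{l\le L}$, which is not $K_n$. Finally, a minor point: the alignment freedom you need lives on the continuum side, since $K_\infty$ is invariant under rotations within degenerate eigenspaces of $\Delta_M$. The discrete eigenvalues in a cluster are generally distinct, so rotating discrete eigenvectors is not a gauge symmetry of $K_n$.
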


\begin{proof}
By Proposition~\ref{prop:stage1_spectral_convergence}(a), $|\lambda_k^n - \mu_k|=\mathcal{O}(n^{-1/(m+4)})$ and $\|u_k^n - \psi_k\|_{L^2}=\mathcal{O}(n^{-1/(m+4)})$ for each fixed $k$.
The Lipschitz condition on $g$ over $[\mu_1,\infty)$ gives $|g(\lambda_k^n)-g(\mu_k)|\leq L_g|\lambda_k^n - \mu_k|=\mathcal{O}(n^{-1/(m+4)})$ for $\lambda_k^n,\mu_k\geq\mu_1$.
Combined with the eigenvector convergence (each term in the sum carries eigenvalue and eigenvector errors at the same rate), this yields per-term convergence at rate $\mathcal{O}(n^{-1/(m+4)})$.
The trace-class assumption $\sum_l f(\mu_l)^2 < \infty$ provides the uniform tail control needed to upgrade per-term convergence to convergence of the full sum at the same rate.

The Laplacian-eigenmap case ($f(\lambda)=\lambda^{-1/2}$) falls outside the Lipschitz hypothesis because $g(\lambda)=1/\lambda$ has a pole at $\lambda=0$; that case is handled directly by Proposition~\ref{prop:stage1_spectral_convergence}(b), where the kernel of $\mathcal{L}_n$ is projected out and $K_\infty$ specializes to the Green's function $G_M$.
\end{proof}

\begin{remark}[Rescaled variants of $\mathcal{L}_n$]
\label{rem:rescaled_variants}
Corollary~\ref{cor:kernel_convergence} extends to any operator $P_n$ obtained from $\mathcal{L}_n$ by a smooth eigenvalue transformation: the spectrum of $P_n$ is then a smooth reparameterization of $\sigma(\mathcal{L}_n)$, and the eigenvalue and eigenvector convergence of Proposition~\ref{prop:stage1_spectral_convergence} transfers at the same rate $\mathcal{O}(n^{-1/(m+4)})$ up to bounded similarity factors. A smooth filter $f$ on $P_n$ is equivalently a smooth filter on $\mathcal{L}_n$, so the Lipschitz hypothesis of Corollary~\ref{cor:kernel_convergence} is preserved. Our FastRP implementation (Algorithm~\ref{code:fastrp}) is the case $P_n = D^{-1}A_n = \mathds{1} - \mathcal{L}_{\mathrm{rw},n}$ with $f(P)=\sum_{i=1}^{k} P^i$, both smooth.
\end{remark}

\subsubsection{Stage 2: spectral embeddings as kernel estimators}

We consider the general class of approximate spectral embeddings $\widetilde\Phi = f(P)\cdot R\in\mathbb{R}^{N\times r}$, where $P$ is a graph operator, $f$ is a spectral filter (see \S\ref{sec:approach} for definitions), and $R\in\mathbb{R}^{N\times r}$ is a random matrix with $\mathbb{E}[RR^\top]=\mathds{1}$.
We write $\phi_i = f(P)_{i\cdot}^\top\in\mathbb{R}^N$ for the $i$-th row of $f(P)$ (the ``exact'' spectral embedding of node $i$) and $\tilde\phi_i\in\mathbb{R}^r$ for the $i$-th row of $\widetilde\Phi$ (its random projection); Laplacian eigenmaps~\eqref{eq:laplacian_eigenmaps} are the special case $f(\lambda)=\lambda^{-1/2}$, $P=\mathcal{L}$. The node-indexed $\phi_i$ used here differs vectorwise from the eigenvalue-indexed coordinates $(\phi_i)_k=\lambda_k^{-1/2}(u_k)_i$ of \eqref{eq:laplacian_eigenmaps} (the two are related by an orthogonal change of basis), but they agree at the inner-product level, so the kernel statements below are independent of which representation is used.

\begin{lemma}[Kernel estimator]
\label{lem:kernel_estimator}
For any spectral filter $f$ and random matrix $R\in\mathbb{R}^{N\times r}$ with $\mathbb{E}[RR^\top]=\mathds{1}$, the inner products of $\widetilde\Phi=f(P)\cdot R$ satisfy
$$\mathbb{E}\!\left[\langle\tilde\phi_i,\tilde\phi_j\rangle\right] = K(i,j) := [f(P)f(P)^\top]_{ij}.$$
When $P$ is symmetric with eigenvectors $u_l$, this reduces to $K(i,j) = \sum_l f(\lambda_l)^2\,(u_l)_i(u_l)_j$.
\end{lemma}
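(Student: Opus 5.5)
The plan is to write the inner product as a bilinear form in $R$ and then use linearity of expectation. Write $F:=f(P)$, so $\widetilde\Phi = FR$ and the $i$-th row is $\tilde\phi_i^\top = e_i^\top F R$. Then
\begin{align*}
\langle\tilde\phi_i,\tilde\phi_j\rangle = e_i^\top F R R^\top F^\top e_j = [F RR^\top F^\top]_{ij}.
\end{align*}
$F$ is deterministic, since it depends only on $P$ and $f$ and not on the draw of $R$. The map $M\mapsto e_i^\top F M F^\top e_j$ is linear and acts on a finite-dimensional matrix. Hence expectation commutes with it, which gives $\mathbb{E}[\langle\tilde\phi_i,\tilde\phi_j\rangle] = e_i^\top F\,\mathbb{E}[RR^\top]\,F^\top e_j$. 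Substituting the hypothesis $\mathbb{E}[RR^\top]=\mathds{1}$ yields $[FF^\top]_{ij}=K(i,j)$.

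For the symmetric case, I will use the convention from \S\ref{sec:approach}: $P=U\Lambda U^\top$ with $U$ orthogonal and $f(P)=U\,\mathrm{diag}(f(\lambda_l))\,U^\top$. Then $FF^\top = U\,\mathrm{diag}(f(\lambda_l))\,U^\top U\,\mathrm{diag}(f(\lambda_l))\,U^\top = U\,\mathrm{diag}(f(\lambda_l)^2)\,U^\top$. Here $f(\lambda_l)$ is real, so transposition leaves the diagonal unchanged. Reading off the $(i,j)$ entry gives $\sum_l f(\lambda_l)^2 (u_l)_i(u_l)_j$.

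The proof has essentially no obstacle. The only points needing care are the following.
\begin{enumerate}
\item Integrability: the entries of $R$ must have finite second moments. This is implicit in the assumption that $\mathbb{E}[RR^\top]$ exists and equals $\mathds{1}$.
\item Poles of $f$: if $f$ has a pole at $0$, as for Laplacian eigenmaps, $F$ must be understood with $\ker(P)$ projected out. That is, $f(0)$ is replaced by $0$, matching the convention of Proposition~\ref{prop:neural_operator_full}. The same computation then goes through verbatim, with the sum running over $\lambda_l\neq 0$.
\item Non-symmetric $P$: for example the random-walk matrix $D^{-1}A$. Only the first identity $\mathbb{E}=[FF^\top]_{ij}$ is claimed there, and it needs no spectral structure beyond $F$ being a well-defined deterministic matrix.
\end{enumerate}
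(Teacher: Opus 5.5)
Your proof is correct and follows the same route as the paper: pull the deterministic $f(P)$ outside the expectation of $\widetilde\Phi\widetilde\Phi^\top = f(P)RR^\top f(P)^\top$, substitute $\mathbb{E}[RR^\top]=\mathds{1}$, and read off the $(i,j)$ entry. Your explicit computation $f(P)f(P)^\top = U\,\mathrm{diag}(f(\lambda_l)^2)\,U^\top$ for symmetric $P$ fills in a step the paper leaves implicit.
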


\begin{proof}
$\mathbb{E}[\widetilde\Phi\,\widetilde\Phi^\top] = f(P)\,\mathbb{E}[RR^\top]\,f(P)^\top = f(P)\cdot \mathds{1}\cdot f(P)^\top = f(P)f(P)^\top$.
Taking the $(i,j)$ entry gives the result.
\end{proof}

\begin{proposition}[Exact gauge invariance of $K$]
\label{prop:exact_gauge_inv}
The kernel $K(i,j)=[f(P)f(P)^\top]_{ij}$ is exactly invariant under all gauge transformations of the eigenvector basis: sign flips $u_l\to s_l u_l$, orthogonal rotations within degenerate eigenspaces, and different numerical solver choices.
\end{proposition}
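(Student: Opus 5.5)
The idea is to show that $K=f(P)f(P)^\top$ can be written without reference to any choice of eigenvector basis. Gauge invariance then follows because every admissible gauge transformation changes only the basis. For symmetric $P$, we group the eigendecomposition by distinct eigenvalues. Let $\nu_1,\dots,\nu_s$ be the distinct eigenvalues of $P$, and let $\Pi_a$ be the orthogonal projector onto the eigenspace $E_a=\ker(P-\nu_a\mathds{1})$. The spectral theorem gives $P=\sum_a \nu_a \Pi_a$, and the functional calculus gives $f(P)=\sum_a f(\nu_a)\Pi_a$. Using $\Pi_a\Pi_b=\delta_{ab}\Pi_a$ and $\Pi_a^\top=\Pi_a$, we get $K=f(P)f(P)^\top=\sum_a f(\nu_a)^2\,\Pi_a$. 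The key point is that each $\Pi_a$ is uniquely determined by $P$: it is the orthogonal projector onto a subspace defined by $P$ alone. Equivalently, $\Pi_a=p_a(P)$ for the Lagrange interpolation polynomial $p_a$ with $p_a(\nu_b)=\delta_{ab}$. Hence $K$ is a function of $P$ only.

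Next we connect this to the eigenvector form of Lemma~\ref{lem:kernel_estimator}. Let $U_a$ be any matrix whose columns are an orthonormal basis of $E_a$. Then $\Pi_a=U_aU_a^\top$, so $K(i,j)=\sum_l f(\lambda_l)^2 (u_l)_i(u_l)_j=\sum_a f(\nu_a)^2[U_aU_a^\top]_{ij}$. Every gauge transformation acts as $U_a\mapsto U_aQ_a$ with $Q_a\in O(\dim E_a)$. A sign flip $u_l\mapsto s_lu_l$ is the case $Q_a=\mathrm{diag}(\pm1)$, and a rotation within a degenerate eigenspace is a general $Q_a$. Reordering eigenpairs with equal eigenvalues is again an orthogonal $Q_a$. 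Reordering across distinct eigenvalues only permutes the terms of the sum, since each eigenvector stays paired with its own eigenvalue. Since $U_aQ_aQ_a^\top U_a^\top=U_aU_a^\top$, each term is unchanged, and so is $K$.

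For "solver choices," the plan is to model any eigensolver's output as some orthonormal eigenbasis of $P$ together with its eigenvalues, so it is covered by the previous step. We will also note that when $f(P)$ is computed directly, for example polynomially as in FastRP, no basis is ever chosen. If $P$ is not symmetric, as with the random-walk operator $D^{-1}A$, we define $f(P)$ via the holomorphic or polynomial functional calculus. This is again basis-free, because any diagonalization $P=V\Lambda V^{-1}$ gives the same $Vf(\Lambda)V^{-1}$ (by the same projector argument applied to spectral projectors). Alternatively, one can use the similarity $D^{-1}A=D^{-1/2}(D^{-1/2}AD^{-1/2})D^{1/2}$ to reduce to the symmetric case.

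The main obstacle, a mild one, is making "solver choices" and "gauge transformation" precise enough to assert exact invariance. The first-paragraph argument, that $\Pi_a$ and hence $f(P)$ are intrinsic to $P$, is what makes this rigorous. Everything else is the one-line identity $QQ^\top=\mathds{1}$. A secondary technicality is the case where $f$ has a pole at $0$. There we restrict the sum to $\nu_a\neq0$ (projecting out $\ker P$), and the same argument applies verbatim, because $\Pi_0$ is also intrinsic.
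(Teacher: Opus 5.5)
Your proposal is correct and follows essentially the same route as the paper. The paper's proof is the one-line observation that $f(P)f(P)^\top$ is a function of $P$ alone, while every gauge transformation leaves $P$ unchanged. Your spectral-projector argument ($\Pi_a=p_a(P)$ via Lagrange interpolation, hence $K=\sum_a f(\nu_a)^2\Pi_a$) rigorously supplies the step the paper takes for granted: the paper defines $f(P)$ through a chosen eigendecomposition $U\,\mathrm{diag}(f(\lambda_1),\ldots,f(\lambda_N))\,U^\top$, and your argument shows this does not depend on that choice. Your explicit $U_a\mapsto U_aQ_a$ check and your handling of $D^{-1}A$ and of poles at $0$ are consistent elaborations of that same idea.
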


\begin{proof}
$K(i,j)=[f(P)f(P)^\top]_{ij}$ is an entry of the matrix $f(P)f(P)^\top$, which is a function of $P$ alone and does not depend on the choice of eigenvector basis. All gauge transformations leave $P$ unchanged, so $K$ is unchanged.
\end{proof}

\begin{lemma}[Concentration bound]
\label{lem:concentration}
Let $\mathbf{a}=f(P)_{i\cdot}^\top$ and $\mathbf{b}=f(P)_{j\cdot}^\top$ be the $i$-th and $j$-th rows of $f(P)$, and let $R\in\mathbb{R}^{N\times r}$ have i.i.d.\ sub-Gaussian entries with mean zero and variance $1/r$. Then:

\textbf{(i) Tail bound.} By the Hanson-Wright inequality \citep{rudelson2013hansonwright}, for any $t>0$:
$$\Pr\!\left[\left|\langle\tilde\phi_i,\tilde\phi_j\rangle - K(i,j)\right| > t\right]
\leq 2\exp\!\left(-c\cdot r\cdot\min\!\left(\frac{t^2}{\|\mathbf{a}\|^2\|\mathbf{b}\|^2},\frac{t}{\|\mathbf{a}\|\|\mathbf{b}\|}\right)\right),$$
where $c>0$ is an absolute constant depending only on the sub-Gaussian norm of the entries of $R$.

\textbf{(ii) Scaling of $r$.} Fix a target accuracy $\varepsilon>0$ and a failure probability $\delta\in(0,1)$. Setting $t=\varepsilon\|\mathbf{a}\|\|\mathbf{b}\|$ in the bound above, and noting that for $\varepsilon\leq 1$ the minimum is achieved by the quadratic term $t^2/(\|\mathbf{a}\|^2\|\mathbf{b}\|^2)=\varepsilon^2$, we require
$$2\exp(-c\cdot r\cdot \varepsilon^2) \leq \delta
\qquad\Longleftrightarrow\qquad
r \geq \frac{\log(2/\delta)}{c\,\varepsilon^2}.$$
Therefore $r = \mathcal{O}(\varepsilon^{-2}\log(1/\delta))$ random-projection dimensions suffice to guarantee
$$\left|\langle\tilde\phi_i,\tilde\phi_j\rangle - K(i,j)\right| \leq \varepsilon\,\|\mathbf{a}\|\|\mathbf{b}\|$$
with probability at least $1-\delta$. Equivalently, for fixed $\delta$, the estimation error is $\mathcal{O}(\|\mathbf{a}\|\|\mathbf{b}\|/\sqrt{r})$.
\end{lemma}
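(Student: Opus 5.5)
The statement to prove is Lemma~\ref{lem:concentration}. The plan is to write the inner product as a quadratic form in the entries of $R$ and then apply Hanson--Wright. Write $R=(R_{kl})$ and let $g=\mathrm{vec}(\sqrt{r}\,R)\in\mathbb{R}^{Nr}$; its entries are i.i.d., mean zero, unit variance and sub-Gaussian with norm $\kappa$. Then
$$\langle\tilde\phi_i,\tilde\phi_j\rangle=\sum_{l=1}^r (\mathbf{a}^\top R_{\cdot l})(\mathbf{b}^\top R_{\cdot l}) = \tfrac1r\, g^\top (I_r\otimes \mathbf{a}\mathbf{b}^\top)\, g .$$
Set $A:=\tfrac1r (I_r\otimes \mathbf{a}\mathbf{b}^\top)$. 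Its expectation is $\mathrm{tr}(A)=\langle \mathbf{a},\mathbf{b}\rangle=[f(P)f(P)^\top]_{ij}=K(i,j)$, which is exactly Lemma~\ref{lem:kernel_estimator}. So the deviation we want to control is $g^\top A g-\mathbb{E}\,g^\top A g$.

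Next we compute the two norms that Hanson--Wright needs. The block-diagonal structure gives
$$\|A\|_F^2=\tfrac1{r^2}\cdot r\,\|\mathbf{a}\mathbf{b}^\top\|_F^2=\|\mathbf{a}\|^2\|\mathbf{b}\|^2/r,\qquad \|A\|_{op}=\|\mathbf{a}\|\|\mathbf{b}\|/r .$$
Hanson--Wright \citep{rudelson2013hansonwright} then yields
$$\Pr\big[|g^\top Ag-\mathbb{E}g^\top Ag|>t\big]\le 2\exp\!\Big(-c\min\big(\tfrac{t^2}{\kappa^4\|A\|_F^2},\tfrac{t}{\kappa^2\|A\|_{op}}\big)\Big).$$
Substituting the two norms turns this into $2\exp\!\big(-c\,r\min(t^2/(\|\mathbf{a}\|^2\|\mathbf{b}\|^2),\,t/(\|\mathbf{a}\|\|\mathbf{b}\|))\big)$ once $\kappa$ is absorbed into $c$. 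That is part (i). The only step that needs care is this normalization bookkeeping. Because of the $1/r$ variance scaling, $r$ shows up multiplicatively in both terms of the minimum. This is what lets $r$ be factored out, and I would check it explicitly.

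Part (ii) is algebra. Put $t=\varepsilon\|\mathbf{a}\|\|\mathbf{b}\|$. For $\varepsilon\le1$ we have $\min(\varepsilon^2,\varepsilon)=\varepsilon^2$, so the tail is at most $2e^{-cr\varepsilon^2}$. Requiring this to be $\le\delta$ and solving gives $r\ge\log(2/\delta)/(c\varepsilon^2)$. Now fix $\delta$ and invert, $\varepsilon=\sqrt{\log(2/\delta)/(cr)}$. This is the $\mathcal{O}(\|\mathbf{a}\|\|\mathbf{b}\|/\sqrt r)$ error rate, and it is valid once $r\ge\log(2/\delta)/c$, since that is exactly when this $\varepsilon\le1$.
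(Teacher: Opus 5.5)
Your proposal is correct and follows essentially the same route as the paper. Both write $\langle\tilde\phi_i,\tilde\phi_j\rangle=\mathbf{a}^\top(RR^\top)\mathbf{b}$ as a quadratic form in the entries of $R$, apply Hanson--Wright, and invert the tail bound for part (ii). Your explicit Kronecker form $A=\tfrac1r(I_r\otimes\mathbf{a}\mathbf{b}^\top)$, with $\|A\|_F^2=\|\mathbf{a}\|^2\|\mathbf{b}\|^2/r$ and $\|A\|_{op}=\|\mathbf{a}\|\|\mathbf{b}\|/r$, makes precise the $r$-bookkeeping that the paper only sketches (``summing $r$ such terms''), and it needs none of the paper's symmetrization because the Rudelson--Vershynin form of Hanson--Wright holds for non-symmetric matrices.
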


\begin{proof}
Write $R=[\mathbf{r}_1\;\cdots\;\mathbf{r}_r]$ where each column $\mathbf{r}_s\in\mathbb{R}^N$ has i.i.d.\ sub-Gaussian entries with mean zero and variance $1/r$. Then
$$\langle\tilde\phi_i,\tilde\phi_j\rangle
= \sum_{s=1}^{r}(\mathbf{a}^\top\mathbf{r}_s)(\mathbf{b}^\top\mathbf{r}_s)
= \mathbf{a}^\top\!\left(\sum_{s=1}^{r}\mathbf{r}_s\mathbf{r}_s^\top\right)\!\mathbf{b}
= \mathbf{a}^\top(RR^\top)\mathbf{b}.$$
Each summand $(\mathbf{a}^\top\mathbf{r}_s)(\mathbf{b}^\top\mathbf{r}_s)$ is independent with mean $\mathbb{E}[\mathbf{a}^\top\mathbf{r}_s\mathbf{r}_s^\top\mathbf{b}] = (1/r)\,\mathbf{a}^\top\mathbf{b} = (1/r)\,K(i,j)$, so $\mathbb{E}[\langle\tilde\phi_i,\tilde\phi_j\rangle]=K(i,j)$. The product $(\mathbf{a}^\top\mathbf{r}_s)(\mathbf{b}^\top\mathbf{r}_s)$ can be written as $\mathbf{r}_s^\top(\mathbf{a}\mathbf{b}^\top)\mathbf{r}_s$, a quadratic form in a sub-Gaussian random vector; symmetrizing the (asymmetric) form $\mathbf{a}\mathbf{b}^\top$ to $M:=\tfrac12(\mathbf{a}\mathbf{b}^\top+\mathbf{b}\mathbf{a}^\top)$ leaves the value of the quadratic form unchanged and gives matrix norms $\|M\|_F\leq\|\mathbf{a}\|\|\mathbf{b}\|$ and $\|M\|_{\mathrm{op}}\leq\|\mathbf{a}\|\|\mathbf{b}\|$. Summing $r$ such terms and applying the Hanson-Wright inequality \citep{rudelson2013hansonwright} to $\langle\tilde\phi_i,\tilde\phi_j\rangle - K(i,j) = \mathbf{a}^\top(RR^\top - \mathds{1})\mathbf{b}$ yields the tail bound in part (i). Part (ii) follows by inverting this bound.
\end{proof}

\subsubsection{Stage 3: gauge invariance of GIST}

GIST computes attention weights via linear attention with feature map $\varphi=\mathrm{ReLU}$ (as in Algorithm~\ref{code:gauge_invariant_attention}): the effective logits are $e_{ij}=\langle\varphi(\tilde\phi_i),\varphi(\tilde\phi_j)\rangle$. The gauge invariance of $e_{ij}$ rests on a sample-wise observation: for any fixed realization of the random matrix $R$, the embedding $\tilde\phi_i=[f(P)R]_i$ is a deterministic function of $P$ (computed without any eigendecomposition), so gauge transformations of the eigenvector basis, which leave $P$ unchanged, leave $\tilde\phi_i$, $\varphi(\tilde\phi_i)$, and $e_{ij}$ exactly unchanged. The argument is distribution-free and applies uniformly to any random-projection scheme of the form \eqref{eq:spectral_embedding_class}. In expectation, $\mathbb{E}[\langle\tilde\phi_i,\tilde\phi_j\rangle]=K(i,j)=[f(P)f(P)^\top]_{ij}$ (Lemma~\ref{lem:kernel_estimator}), tying the population-level attention to the gauge-invariant kernel $K$ (Proposition~\ref{prop:exact_gauge_inv}). This establishes part~(iii) of Proposition~\ref{prop:neural_operator_full}.

\subsubsection{Discretization mismatch error analysis}

Combining Stages~1 and~2, for two nested discretizations $\mathcal{G}_n\subseteq\mathcal{G}_{n'}$ of the same manifold $M$ and any pair of common nodes $i,j\in\mathcal{G}_n$, the attention kernel mismatch is bounded by the triangle inequality:
\begin{align*}
\left|\langle\tilde\phi_i^n,\tilde\phi_j^n\rangle - \langle\tilde\phi_i^{n'},\tilde\phi_j^{n'}\rangle\right|
&\leq \underbrace{\left|\langle\tilde\phi_i^n,\tilde\phi_j^n\rangle - K_n(i,j)\right|}_{\mathcal{O}(1/\sqrt{r})\text{ by Lemma~\ref{lem:concentration}}}
+ \underbrace{\left|K_n(i,j) - K_{n'}(i,j)\right|}_{\mathcal{O}(n^{-1/(m+4)})\text{ by Corollary~\ref{cor:kernel_convergence}}}
+ \underbrace{\left|K_{n'}(i,j) - \langle\tilde\phi_i^{n'},\tilde\phi_j^{n'}\rangle\right|}_{\mathcal{O}(1/\sqrt{r})\text{ by Lemma~\ref{lem:concentration}}},
\end{align*}
where $K_n(i,j)=[f(P_n)f(P_n)^\top]_{ij}$ denotes the kernel on discretization $\mathcal{G}_n$.
The middle term follows from Corollary~\ref{cor:kernel_convergence}: both $K_n$ and $K_{n'}$ converge to the same continuum kernel $K_\infty(x_i,x_j)=\sum_l f(\mu_l)^2\psi_l(x_i)\psi_l(x_j)$ at rate $\mathcal{O}(n^{-1/(m+4)})$, so by the triangle inequality $|K_n(i,j)-K_{n'}(i,j)|=\mathcal{O}(n^{-1/(m+4)})$.
The total bound is therefore
\begin{equation}\label{eq:kernel_mismatch}
\left|\langle\tilde\phi_i^n,\tilde\phi_j^n\rangle - \langle\tilde\phi_i^{n'},\tilde\phi_j^{n'}\rangle\right| = \mathcal{O}\!\left(\frac{1}{\sqrt{r}}\right) + \mathcal{O}\!\left(n^{-1/(m+4)}\right),
\end{equation}
with the two error sources combining additively via the triangle inequality. Choosing $r=\mathcal{O}(\varepsilon^{-2})$ and $n=\mathcal{O}(\varepsilon^{-(m+4)})$ achieves total error $\varepsilon$.

\subsubsection{From kernel mismatch to output mismatch}

The analysis above bounds the gauge-invariant attention kernel. The following corollary propagates that bound through the linear-attention normalization of a gauge-invariant spectral self-attention block, completing the chain from part~(ii) of Proposition~\ref{prop:neural_operator_full} to a discretization-mismatch bound on the corresponding output.

\begin{corollary}[Output mismatch for a gauge-invariant attention block]
\label{cor:output_mismatch}
Consider a gauge-invariant spectral self-attention block of GIST applied to node features $x_i$, with shared spectral embeddings $\tilde\phi_i$ at the input. Assume:
\begin{enumerate}
    \item[(a)] \emph{Bounded embeddings:} $\sup_{n,i}\|f(P_n)_{i\cdot}\|\leq\rho$ and $\|\tilde\phi_i^n\|\leq\rho$ (the former controls the $\|\mathbf{a}\|\|\mathbf{b}\|$ scaling in Lemma~\ref{lem:concentration}).
    \item[(b)] \emph{Bounded features:} $\|x_i\|,\|v_j\|\leq B$, and node features agree at common nodes across nested discretizations.
    \item[(c)] \emph{Bounded weights:} all learnable weight matrices have spectral norm at most $w$.
    \item[(d)] \emph{Non-degenerate normalizer:} $\inf_{n,i}\frac{1}{N}\sum_j \varphi(\tilde\phi_i^n)^\top\varphi(\tilde\phi_j^n) \geq Z_* > 0$ (the average attention weight is bounded away from zero). With $q_i=k_i=\tilde\phi_i$ this implies the linear-attention normalizer $Z_i^n = \sum_j \varphi(q_i)^\top\varphi(k_j) \geq N\,Z_*$, scaling linearly with the discretization.
    \item[(e)] \emph{Lipschitz feature map:} $\varphi$ is $\mathrm{Lip}_\varphi$-Lipschitz on the ball of radius $\rho$.
    \item[(f)] \emph{Lipschitz Gram regularity of the population kernel:} $\bar k(i,j):=\mathbb{E}_R[\langle\varphi(\tilde\phi_i),\varphi(\tilde\phi_j)\rangle]$ depends Lipschitz-continuously on the Gram quantities $(\|f(P)_{i\cdot}\|^2, \|f(P)_{j\cdot}\|^2, \langle f(P)_{i\cdot}, f(P)_{j\cdot}\rangle)$ on the bounded set defined by (a).
\end{enumerate}
Let $o_i^n$ denote the output at node $i$ on discretization $\mathcal{G}_n$. Then, for any two nested discretizations $\mathcal{G}_n\subseteq\mathcal{G}_{n'}$ of the same manifold $M$ as in Proposition~\ref{prop:neural_operator_full} and any common node $i\in\mathcal{G}_n$,
$$\bigl\|o_i^n - o_i^{n'}\bigr\| \;=\; C\cdot\!\left[\mathcal{O}\!\left(n^{-1/(m+4)}\right) + \mathcal{O}\!\left(r^{-1/2}\right)\right],$$
where the constant $C$ depends polynomially on $\rho, B, w, Z_*^{-1}$ and $\mathrm{Lip}_\varphi$ but not on $n$ or $r$.
\end{corollary}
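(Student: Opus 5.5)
The plan is to write the block output in linear-attention form,
$$o_i^n=\frac{\sum_{j\in\mathcal{G}_n}\varphi(\tilde\phi_i^n)^\top\varphi(\tilde\phi_j^n)\,v_j}{\sum_{j\in\mathcal{G}_n}\varphi(\tilde\phi_i^n)^\top\varphi(\tilde\phi_j^n)}=\frac{\frac1N\sum_j e^n_{ij}v_j}{\frac1N Z_i^n/N\cdot N}\;,$$
and to treat numerator and denominator as empirical averages over the node sample. With $e_{ij}^n:=\langle\varphi(\tilde\phi_i^n),\varphi(\tilde\phi_j^n)\rangle$, set $A_i^n=\frac1N\sum_j e^n_{ij}v_j$ and $S_i^n=\frac1N\sum_j e^n_{ij}$, so that $o_i^n=A_i^n/S_i^n$. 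The elementary quotient bound
$$\|A/S-A'/S'\|\le \frac{\|A-A'\|}{S}+\frac{\|A'\|\,|S-S'|}{S S'}$$
then applies. Assumption (d) gives $S,S'\ge Z_*$. Assumptions (a), (b) and (e) give $\|A'\|\le \mathrm{Lip}_\varphi^2\rho^2 B$, using $\varphi(0)=0$ for ReLU. So it suffices to bound $\|A_i^n-A_i^{n'}\|$ and $|S_i^n-S_i^{n'}|$ by $C[\mathcal{O}(n^{-1/(m+4)})+\mathcal{O}(r^{-1/2})]$. Values $v_j=W_vx_j$ are bounded by $wB$ via (b) and (c), which is where $w$ enters $C$.

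To control the entrywise logits, I would compare each $e^n_{ij}$ to the population quantity $\bar k_n(i,j)=\mathbb{E}_R[e^n_{ij}]$ and split by the triangle inequality into three pieces, mirroring the kernel-mismatch display \eqref{eq:kernel_mismatch}. The first and third pieces are statistical, $|e^n_{ij}-\bar k_n(i,j)|$. I would bound them by the same sub-Gaussian concentration used in Lemma~\ref{lem:concentration}, now applied to the ReLU features. Each coordinate $\varphi(\mathbf{a}^\top\mathbf{r}_s)\varphi(\mathbf{b}^\top\mathbf{r}_s)$ is a product of Lipschitz functions of sub-Gaussian variables, hence sub-exponential with norm $\lesssim\|\mathbf{a}\|\|\mathbf{b}\|/r$. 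Bernstein's inequality then gives error $\mathcal{O}(\rho^2/\sqrt r)$. The middle piece, $|\bar k_n(i,j)-\bar k_{n'}(i,j)|$, is handled by (f): $\bar k$ is Lipschitz in the Gram triple. Each Gram entry is a kernel value $K_n(i,i)$, $K_n(j,j)$ or $K_n(i,j)$, and by Corollary~\ref{cor:kernel_convergence} these differ from their $n'$ counterparts by $\mathcal{O}(n^{-1/(m+4)})$. So $|e^n_{ij}-e^{n'}_{ij}|=\mathcal{O}(n^{-1/(m+4)})+\mathcal{O}(r^{-1/2})$ holds at common nodes.

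The main obstacle is passing from entrywise logits to the averages $A$ and $S$. The two sums run over different node sets: $N=n$ versus $N=n'$, and the fine mesh contains nodes absent from the coarse one. I would handle this by viewing both averages as Monte Carlo approximations of the continuum integrals
$$\int_M \bar k_\infty(x_i,y)\,v(y)\,d\mu(y),\qquad \int_M \bar k_\infty(x_i,y)\,d\mu(y).$$
Here the features $v$ are assumed to be sampled from a common underlying field, extending the reading of (b), and $\bar k_\infty$ is the limit of $\bar k_n$ given by (f) and Corollary~\ref{cor:kernel_convergence}. Each average then differs from its integral by three contributions:
\begin{enumerate}[leftmargin=*,itemsep=0pt,topsep=2pt]
\item a uniform-in-$j$ logit error $\mathcal{O}(n^{-1/(m+4)})+\mathcal{O}(r^{-1/2})$;
\item a sampling error $\mathcal{O}(n^{-1/2})$ from i.i.d.\ uniform sampling, via Hoeffding's inequality for bounded summands, which is dominated by $n^{-1/(m+4)}$;
\item a residual from the uniform version of the concentration step.
\end{enumerate}
The uniform statement requires a union bound over the $N$ nodes, which costs a $\log N$ factor; I would absorb it into the "up to log factors" convention of Proposition~\ref{prop:stage1_spectral_convergence}, or fix $\delta$ per node. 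I expect this uniformity, together with justifying that Corollary~\ref{cor:kernel_convergence} holds uniformly in $j$ rather than only pointwise, to be the delicate step. Once it is done, inserting the bounds into the quotient inequality gives a constant polynomial in $\rho,B,w,Z_*^{-1},\mathrm{Lip}_\varphi$, namely of order $\mathrm{Lip}_\varphi^2\rho^2wB\,Z_*^{-2}$ times the Lipschitz constant from (f).
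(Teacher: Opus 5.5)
Your proposal is correct at the paper's level of rigor. It shares the paper's skeleton: an entrywise logit bound from concentration, (f) and Corollary~\ref{cor:kernel_convergence}, then propagation through the normalized attention map using (d). It handles the second step differently.

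\textbf{The second step.} The paper asserts that $\{e_{ij}\}_j\mapsto o_i$ is Lipschitz. That tacitly compares sums over a common index set. The extra nodes of $\mathcal{G}_{n'}\setminus\mathcal{G}_n$ enter only through the quadrature comment in Remark~\ref{rem:cor_refinements}(iii). You write $o_i=A_i/S_i$ with $S_i\geq Z_*$ and compare each mesh separately to the continuum integrals, which makes this step explicit. The price is what you already identify. First, you need discrete-to-continuum convergence $|\bar k_n(i,j)-\bar k_\infty(x_i,x_j)|$ uniformly over all $j$ of each mesh. This supersedes your common-node bound on $|e^n_{ij}-e^{n'}_{ij}|$, which your argument no longer uses. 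Second, you need the hypothesis that features sample one underlying field. That hypothesis is necessary, not cosmetic: under (b) alone the features on the extra nodes are unconstrained, and $\|o_i^n-o_i^{n'}\|$ need not vanish. The paper relies on it only implicitly.

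\textbf{Concentration tool.} Your Bernstein bound for the sub-exponential products $\varphi(\mathbf{a}^\top\mathbf{r}_s)\varphi(\mathbf{b}^\top\mathbf{r}_s)$ is the right tool. Hanson-Wright, which the paper cites here, governs the quadratic form $\langle\tilde\phi_i,\tilde\phi_j\rangle$, not the ReLU features.

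\textbf{Removing the $\log N$.} The $\log N$ from your union bound is avoidable. This matters because the statement asks for a constant independent of $n$. Concentrate the aggregate rather than each logit. With $\mathbf{a}_j:=f(P_n)_{j\cdot}^\top$, one has $\frac1N\sum_j\bigl(e_{ij}-\bar k_n(i,j)\bigr)v_j=\sum_{s=1}^r(\xi_s-\mathbb{E}\xi_s)$, where $\xi_s=\varphi(\mathbf{a}_i^\top\mathbf{r}_s)\,\frac1N\sum_j\varphi(\mathbf{a}_j^\top\mathbf{r}_s)v_j$. The $\xi_s$ are independent across $s$ and componentwise sub-exponential with norm $\lesssim\rho^2B/r$, by the triangle inequality for the $\psi_2$-norm. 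A single Bernstein bound per output node then gives $\mathcal{O}(\rho^2B/\sqrt r)$ with no union bound over $j$. This is the precise version of your ``fix $\delta$ per node'' option.
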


\begin{proof}[Proof sketch]
\emph{(1) From kernel mismatch to raw-weight mismatch.} The raw linear-attention weights of the gauge-invariant block are $e_{ij}=\langle\varphi(\tilde\phi_i),\varphi(\tilde\phi_j)\rangle$ (the ``effective logits'' of Stage~3), with normalizer $Z_i = \sum_j e_{ij}$. The kernel-mismatch bound \eqref{eq:kernel_mismatch} (instantiated with $\|\mathbf{a}\|,\|\mathbf{b}\|\leq\rho$) controls $|\langle\tilde\phi_i^n,\tilde\phi_j^n\rangle - \langle\tilde\phi_i^{n'},\tilde\phi_j^{n'}\rangle|$ uniformly in $i, j$, and the diagonal case $i=j$ controls $|\|\tilde\phi_i^n\|^2 - \|\tilde\phi_i^{n'}\|^2|$. By assumption~(f), the population kernel $\bar k(i,j):=\mathbb{E}_R[e_{ij}]$ is Lipschitz in these Gram quantities, so $|\bar k_n(i,j) - \bar k_{n'}(i,j)|=\mathcal{O}(n^{-1/(m+4)})$. Combined with $\mathcal{O}(r^{-1/2})$ concentration of $e_{ij}$ around $\bar k(i,j)$ (Hanson-Wright on the bounded set), the triangle inequality yields $|e_{ij}^n - e_{ij}^{n'}|=\mathcal{O}(n^{-1/(m+4)})+\mathcal{O}(r^{-1/2})$.

\emph{(2) From raw-weight mismatch to output mismatch.} The values are $v_j = W_v x_j$, gauge- and discretization-independent at common nodes by~(b)~and~(c). The map $\{e_{ij}\}_j \mapsto o_i = \sum_j e_{ij} v_j / Z_i$ is Lipschitz under (b) and (d), with the constant absorbing the linear scaling $Z_i\geq N Z_*$ of the normalizer guaranteed by~(d) so that the per-pair perturbation in step~(1) does not accumulate with $N$; the output mismatch then inherits the rate of step~(1). Linear projections, Lipschitz nonlinearities, and residual paths downstream of attention preserve the rate under (c).
\end{proof}

Assumptions (a)--(c) are standard regularity conditions for trained transformer blocks; (d) is generic for $\mathrm{ReLU}$ on non-trivial embeddings; (f) is satisfied by Gaussian random projections, where $\bar k$ is the first-order arc-cosine kernel \citep{cho2009kernel}.

\begin{remark}[Refinements of the bound]
\label{rem:cor_refinements}
Three technical points sharpen the statement above without changing the leading-order rate:

\emph{(i) Excluded filters.} Assumption~(a) requires a uniformly bounded discrete kernel diagonal $K_n(i,i)\leq\rho^2$ and therefore excludes filters whose continuum kernel diverges on the diagonal, notably $f(\lambda)=\lambda^{-1/2}$ on manifolds with $m\geq 2$ (where $K_\infty(x_i,x_i)=G_M(x_i,x_i)$ has the standard logarithmic ($m=2$) or $|x_i-x_j|^{2-m}$ ($m\geq 3$) Green's-function singularity). Trace-class filters with bounded continuum diagonal, such as heat kernels $f(\lambda)=e^{-t\lambda}$ and bounded polynomial filters (including the FastRP filter $f(P)=\sum_{i=1}^{k} P^i$), satisfy~(a).

\emph{(ii) Uniform statistical error.} The per-pair concentration of Lemma~\ref{lem:concentration} extends to all $\binom{n}{2}$ node pairs via a union bound, replacing $\mathcal{O}(r^{-1/2})$ by $\mathcal{O}(\sqrt{\log n / r})$.

\emph{(iii) Spatial quadrature.} The discrete output is a Monte-Carlo approximation of the underlying continuum operator integral; for i.i.d.\ uniform sampling the resulting quadrature error is $\mathcal{O}(n^{-1/2})$, dominated by the spectral rate $\mathcal{O}(n^{-1/(m+4)})$ and absorbed into it.
\end{remark}

\section{Algorithms}\label{app:algorithms}

Note that in the pseudocode we use bold notation for matrices and vectors ($\mathbf{A}, \mathbf{\Phi}, \mathbf{Q}$) and follow the row-vector convention standard in machine learning: $\mathbf{\Phi}\in\mathbb{R}^{N\times r}$ has nodes as rows and embedding dimensions as columns.
In the main text, we use non-bold notation for compactness, with $\phi_i \in \mathbb{R}^r$ denoting the $i$-th node embedding (the $i$-th row of $\mathbf{\Phi}$, treated as a column vector when forming inner products) and upper-case characters denoting matrices.
The symbol $\mathbf{1}_N$ denotes the all-ones column vector in $\mathbb{R}^N$.

\begin{algorithm}
	\caption{Spectral embeddings via FastRP \citep{chen2019fast}}
	\begin{algorithmic}[1]
		\Require Graph adjacency matrix $\mathbf{A}\in\mathbb{R}^{N \times N}$, embedding dimensionality $r$, iteration power $k$
		\Ensure Matrix of $N$ node graph positional embeddings $\mathbf{\Phi} \in \mathbb{R}^{N\times r}$
		\State Produce very sparse random projection $\mathbf{R} \in \mathbb{R}^{N \times r}$ according to \citet{li2006very}
		\State $\mathbf{P} \leftarrow \mathbf{D}^{-1}\cdot\mathbf{A}$ the random walk transition matrix, where $\mathbf{D}$ is the degree matrix
		\State $\mathbf{\Phi_1}\leftarrow \mathbf{P}\cdot\mathbf{R}$
		\For{$i = 2$ to $k$}
		\State $\mathbf{\Phi}_i \leftarrow \mathbf{P}\cdot\mathbf{\Phi}_{i-1}$
		\EndFor
		\State $\mathbf{\Phi} = \mathbf{\Phi}_1 + \mathbf{\Phi}_2 + \cdots + \mathbf{\Phi}_k$
		\State \Return $\mathbf{\Phi}$
	\end{algorithmic}
	\label{code:fastrp}
\end{algorithm}

Below we provide pseudo-code for the core computations of GIST, the \emph{Gauge-Invariant Spectral Self-Attention} block and the \emph{Gauge-Equivariant Spectral Self-Attention} block.
For illustration purposes, we compare the algorithms to a stripped down implementation of self-attention.
Modifications relative to vanilla self-attention are highlighted in \textcolor{red}{red} (additions) and \textcolor{gray}{\st{strikethrough}} (removals).

\begin{algorithm}
	\caption{\textcolor{red}{Gauge-Invariant Spectral} Self-Attention}
	\begin{algorithmic}[1]
		\Require Node feature tokens $\mathbf{X} \in \mathbb{R}^{N \times d}$ \textcolor{red}{, graph positional embeddings $\mathbf{\Phi}\in \mathbb{R}^{N \times r}$}
		\Ensure Output sequence $\mathbf{O} \in \mathbb{R}^{N \times d}$ to be applied to features $\mathbf{X}$
		
		\State // \textbf{Compute attention matrices}
		\State \textcolor{gray}{\st{$\mathbf{Q} \leftarrow \mathbf{X}\cdot \mathbf{W}_Q$ where $\mathbf{W}_Q \in \mathbb{R}^{d \times d}$}}
		~\quad\textcolor{red}{$\mathbf{Q} \leftarrow \mathbf{\Phi}$}
		\State \textcolor{gray}{\st{$\mathbf{K} \leftarrow\mathbf{X}\cdot \mathbf{W}_K$ where $\mathbf{W}_K \in \mathbb{R}^{d \times d}$}}
		\quad\textcolor{red}{$\mathbf{K} \leftarrow \mathbf{\Phi}$}
		\State $\mathbf{V} \leftarrow \mathbf{X}\cdot \mathbf{W}_V$ where $\mathbf{W}_V \in \mathbb{R}^{d \times d}$
		
		\State // \textbf{Compute linear attention with feature map $\varphi(x)=\mathrm{ReLU}(x)$}
		\State $\tilde{\mathbf{Q}}, \tilde{\mathbf{K}} \leftarrow  \varphi(\mathbf{Q}),   \varphi(\mathbf{K})$
		\State $\mathbf{S} \leftarrow \tilde{\mathbf{K}}^T \mathbf{V}$ \Comment{Compute key-value matrix: $\mathbb{R}^{r \times d}$}
		\State $\mathbf{Z} \leftarrow 1 / (\tilde{\mathbf{Q}} (\tilde{\mathbf{K}}^T \mathbf{1}_N) + \epsilon)$ \Comment{Normalization factors: $\mathbb{R}^{N}$}
		\State $\mathbf{O} \leftarrow (\tilde{\mathbf{Q}} \mathbf{S}) \odot \mathbf{Z}$ \Comment{Normalized output: element-wise product}
		
		\State \Return $\mathbf{O}$
	\end{algorithmic}
	\label{code:gauge_invariant_attention}
\end{algorithm}

\begin{algorithm}
	\caption{\textcolor{red}{Gauge-Equivariant Spectral} Self-Attention}
	\begin{algorithmic}[1]
		\Require Node feature tokens $\mathbf{X} \in \mathbb{R}^{N \times d}$ \textcolor{red}{, graph positional embeddings $\mathbf{\Phi}\in \mathbb{R}^{N \times r}$}
		\Ensure Output sequence $\mathbf{O} \in \mathbb{R}^{N \times \textcolor{red}{r}}$ \textcolor{red}{to be applied to graph positional embeddings $\mathbf{\Phi}$}
		
		\State // \textbf{Compute attention matrices}
		\State $\mathbf{Q} \leftarrow \mathbf{X}\cdot \mathbf{W}_Q$ where $\mathbf{W}_Q \in \mathbb{R}^{d \times d}$
		\State $\mathbf{K} \leftarrow\mathbf{X}\cdot \mathbf{W}_K$ where $\mathbf{W}_K \in \mathbb{R}^{d \times d}$
		\State \textcolor{gray}{\st{$\mathbf{V} \leftarrow \mathbf{X}\cdot \mathbf{W}_V$ where $\mathbf{W}_V \in \mathbb{R}^{d \times d}$}}
		\quad\textcolor{red}{$\mathbf{V} \leftarrow \mathbf{\Phi}$}
		
		\State // \textbf{Compute linear attention \citep{katharopoulos2020transformers}}
		\State $\tilde{\mathbf{Q}}, \tilde{\mathbf{K}} \leftarrow  \varphi(\mathbf{Q}),   \varphi(\mathbf{K})$
		\State $\mathbf{S} \leftarrow \tilde{\mathbf{K}}^T \mathbf{V}$ \Comment{Compute key-value matrix: $\mathbb{R}^{d \times r}$}
		\State $\mathbf{Z} \leftarrow 1 / (\tilde{\mathbf{Q}} (\tilde{\mathbf{K}}^T \mathbf{1}_N) + \epsilon)$ \Comment{Normalization factors: $\mathbb{R}^{N}$}
		\State $\mathbf{O} \leftarrow (\tilde{\mathbf{Q}} \mathbf{S}) \odot \mathbf{Z}$ \Comment{Normalized output: element-wise product}
		
		\State \Return $\mathbf{O}$
	\end{algorithmic}
	\label{code:gauge_equivariant_attention}
\end{algorithm}

\paragraph{Note on the choice of feature map $\varphi$.}
While in Algorithm~\ref{code:gauge_equivariant_attention} we do not need to impose that restriction, in Algorithm~\ref{code:gauge_invariant_attention} we use the feature map $\varphi(x) = \text{ReLU}(x)$, chosen for its simplicity, non-negativity (yielding valid attention weights in the linear-attention formulation), and Lipschitz continuity. ReLU also has a well-known correspondence with the first-order arc-cosine kernel \citep{cho2009kernel}: for Gaussian random features $\mathbf{a}, \mathbf{b} \in \mathbb{R}^r$, the inner product $\langle \varphi(\mathbf{a}), \varphi(\mathbf{b}) \rangle$ converges (as $r \to \infty$) to $k_1(\mathbf{a},\mathbf{b}) = \frac{1}{\pi}\|\mathbf{a}\|\|\mathbf{b}\|[\sin\theta + (\pi-\theta)\cos\theta]$, where $\theta$ is the angle between $\mathbf{a}$ and $\mathbf{b}$, a function of the Gram quantities $(\|\mathbf{a}\|, \|\mathbf{b}\|, \mathbf{a}^\top \mathbf{b})$.

The gauge invariance of GIST's attention weights is established at the sample level (Stage~3 of the proof of Proposition~\ref{prop:neural_operator_full}): for any fixed realization of $R$, $\tilde\phi_i=[f(P)R]_i$ is a deterministic function of $P$, so any element-wise feature map yields attention weights that are exactly invariant under all gauge transformations of the eigenvector basis. The argument therefore applies to ReLU and to any other element-wise feature map.

The original linear attention work by \citet{katharopoulos2020transformers} used $\varphi(x) = \text{elu}(x) + 1$, which is also non-negative and works well in practice; our framework accommodates either choice. Investigating alternative feature maps (e.g., polynomial kernels, random Fourier features for RBF-like kernels) is left for future work.

\section{Additional empirical studies}\label{app:empirical}

\subsection{Training details}\label{sec:training_details}

GIST models are trained with the AdamW optimizer, small weight decay (typically zero or on the order of $10^{-5}$), and a base learning rate near $10^{-3}$ paired with either a linearly decaying or a cosine-annealed learning-rate schedule. These choices are selected on the validation split as part of the per-benchmark hyperparameter optimization (Section~\ref{sec:hp_robustness}). Loss functions follow the task: cross-entropy for single-label node classification (Cora, PubMed, Photo, ogbn-arxiv), binary cross-entropy with logits for the multi-label PPI benchmark, and mean squared error for surface-pressure regression on AirfRANS, ShapeNet-Car, DrivAerNet, and DrivAerNet++.

The GIST architectures used in our experiments stack two to three multi-scale blocks with hidden dimension $d \in \{128, 256, 512\}$; spectral-embedding hyperparameters $(r, k)$ are tuned within the robust ranges identified in Section~\ref{sec:hp_robustness}. Spectral embeddings are precomputed once per graph and cached, so their cost is amortized across training and is not included in the per-step training time. Single-graph datasets are trained full-graph where memory permits; PPI and mesh datasets are trained in mini-batches sized to the per-graph memory budget.

Per-benchmark hyperparameter configurations are included with the source code released upon acceptance.

\subsection{GIST hyperparameter robustness}\label{sec:hp_robustness}

In order to study the sensitivity of GIST's performance to variations in its spectral embedding hyperparameters, we train multiple simplified GIST architectures (two-block Gauge-Invariant Spectral Self-Attention linear transformers) on the Cora benchmark while varying the power iteration parameter $k$ and the embedding dimension $r$ of the spectral embedding.
These parameters directly control the quality of spectral embeddings while balancing computational efficiency.

As shown in Figure~\ref{fig:fastrp_robustness}, GIST exhibits robust performance across a wide range of both parameters.
The left panel sweeps $k$ with $r$ fixed, a clear but relatively shallow peak in accuracy around the optimal value of $k\approx 32$.
This on one hand suggests that even modest iteration counts are sufficient to capture the essential spectral structure, but also indicates that the choice of the power iteration is quite robust.
The right panel varies embedding dimension $r$ with $k$ fixed, demonstrating a smooth monotonic improvement as $r$ increases.
Crucially, saturation occurs relatively quickly: performance gains beyond $r=256$ are marginal, validating our choice of reasonable embedding dimensions that maintain computational efficiency.

These results empirically validate two important properties: (1) GIST does not require extensive hyperparameter tuning around these spectral parameters, suggesting stable generalization; and (2) the linear end-to-end complexity achieved with modest $k$ and $r$ values is both computationally practical and empirically effective.
Combined with the gauge-invariance guarantees that prevent dependence on arbitrary spectral choices, these hyperparameters provide a principled and empirically robust way to control the approximation quality of spectral embeddings without sacrificing scalability.

\begin{figure}[ht]
	\begin{center}
		\includegraphics[width=\linewidth]{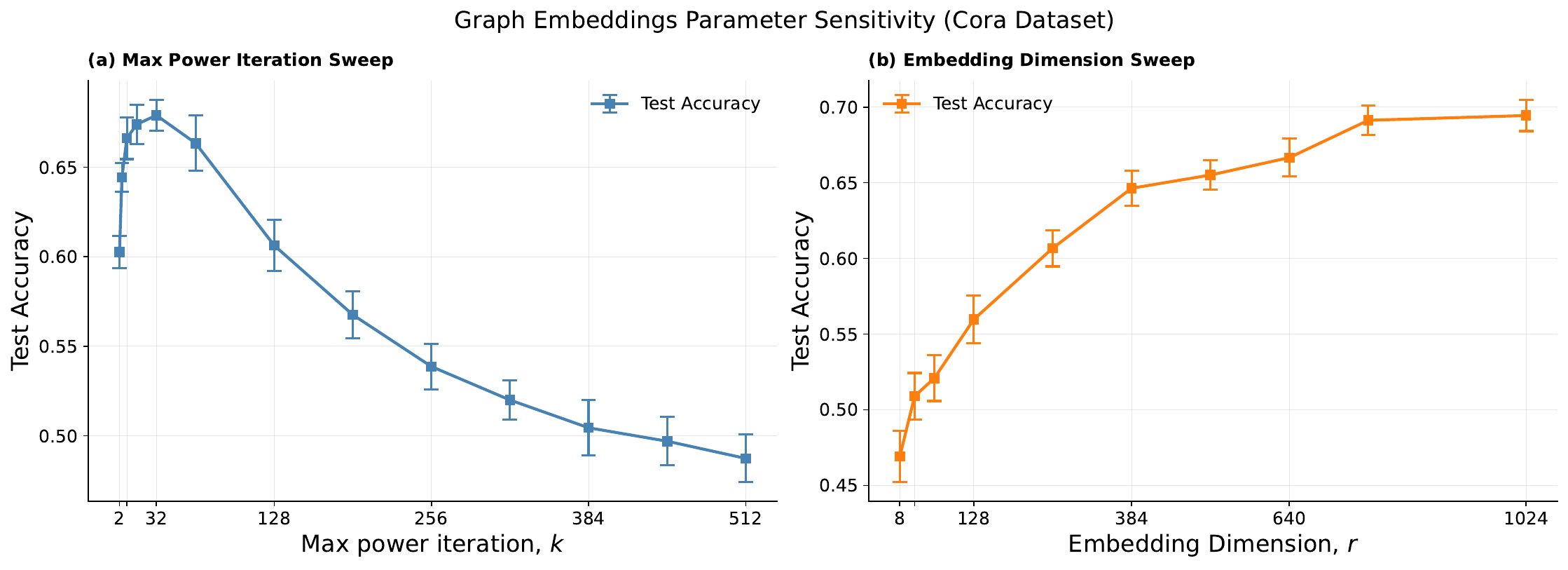}
	\end{center}
	\caption{
		Sensitivity study of GIST spectral embeddings parameters. The plots show the final test accuracy of a two-block Gauge-Invariant Spectral Self-Attention linear transformer trained on Cora while sweeping over the power iteration parameter $k$ with $r=256$ (left panel), and sweeping over the embedding dimension $r$ with $k=32$ (right panel). Test accuracy is fairly robust around the best value of either parameter. As expected, $r$ is monotonically related to higher performance, as higher $r$ correspond to better approximations of the eigenmaps. Accuracy conveniently saturates relatively fast, justifying the use of reasonably low $r$. The plots show mean test accuracy averaged across 10 seeds and corresponding standard deviation as error bars.}
	\label{fig:fastrp_robustness}
\end{figure}

\subsection{GIST scalability study}\label{sec:scalability}

\begin{figure}[ht]
	\begin{center}
		\includegraphics[width=\linewidth]{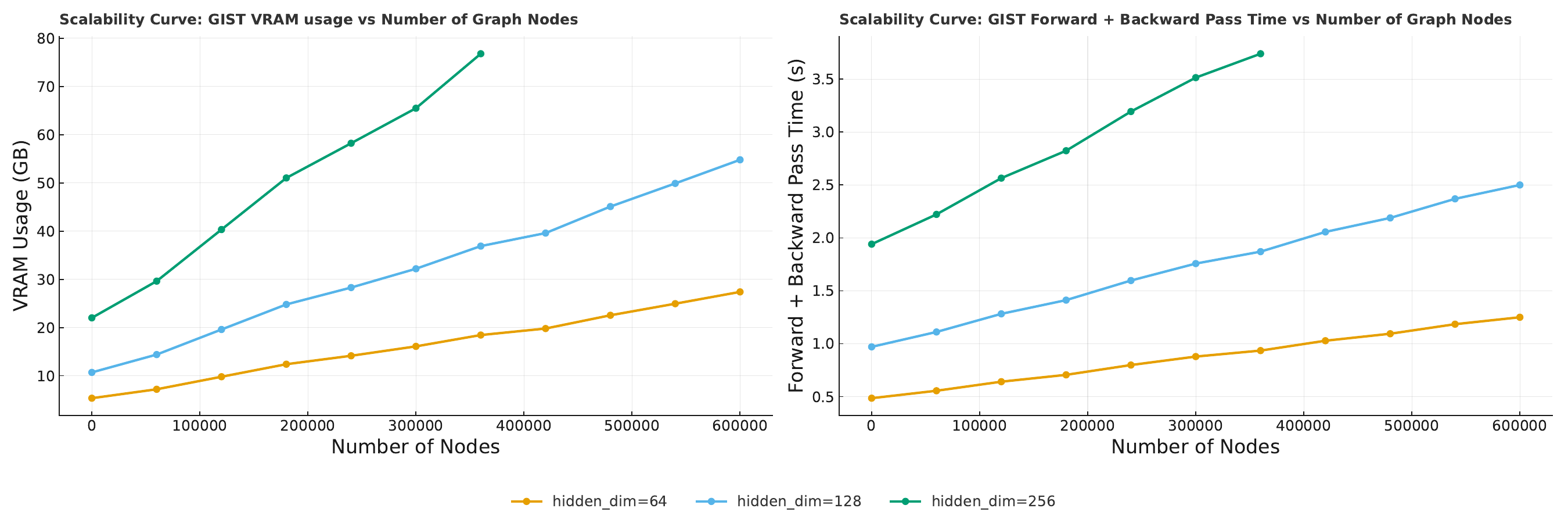}
	\end{center}
	\caption{
		Scalability study of GIST. All experiments use a fixed 3-layer model while varying the hidden dimensionality. VRAM consumption was measured as a function of the number of nodes in the input graph. Graph sizes were controlled using random node dropout applied to samples from the DrivAerNet dataset, enabling a systematic evaluation of memory scaling behavior.}
	\label{fig:gist_scalability}
\end{figure}

From a computational standpoint, the end-to-end cost of GIST is dominated by two components: spectral embedding generation and the subsequent transformer blocks. For the former, we employ a FastRP-style approximation in which the Laplacian spectral information is captured via repeated multiplication of a sparse random walk matrix with a low-dimensional random projection. Each power iteration requires $\mathcal{O}(|E| r)$ operations, where $|E|$ is the number of edges and $r$ is the embedding dimension, and the total cost over $k$ iterations is $\mathcal{O}(k |E| r)$. On meshes and graphs with bounded average degree, $|E| = \mathcal{O}(N)$, so the overall spectral embedding stage scales linearly in the number of nodes $N$. This embedding is computed once per graph and then reused across all GIST layers, so its cost is amortized over the full network depth.

The GIST layers themselves preserve this linear scaling. Each block combines: (i) a feature branch based on linear attention, (ii) a local branch using graph convolution followed by linear attention, and (iii) a global branch using Gauge-Invariant and Gauge-Equivariant Spectral Self-Attention followed by linear attention. In all cases, attention is implemented in the form $\varphi(Q)(\varphi(K)^\top V)$ with an element-wise feature map $\varphi(\cdot)$, which yields $\mathcal{O}(N d^2)$ complexity for $d$-dimensional features instead of the $\mathcal{O}(N^2 d)$ cost of quadratic attention. Together with the $\mathcal{O}(N)$ spectral embedding stage, this results in an overall complexity of $\mathcal{O}(N (d^2 + r d + r k))$ per forward pass, i.e., linear in the number of nodes. The empirical VRAM and wall-clock measurements in Figure~\ref{fig:gist_scalability} corroborate this analysis: both memory usage and forward time grow approximately linearly with the number of graph nodes for all hidden dimensions, up to graphs with hundreds of thousands of nodes sampled from DrivAerNet.

\subsection{Multi-scale architecture ablation study}\label{sec:ablation}

To validate the design choices in the Multi-Scale GIST architecture, we systematically ablate each of the three parallel branches shown in Figure \ref{fig:diagram} (right panel) on the PPI dataset, where the full architecture achieves state-of-the-art performance (see Table \ref{tab:ppi}).
For each ablation, we train with identical hyperparameters but remove one branch: (1) feature processing, (2) local graph convolution, or (3) global spectral attention.
These ablation experiments are repeated across 10 random seeds.

Table \ref{tab:ablation} shows that all three branches contribute meaningfully, with performance drops ranging from 4.29\% to 7.90\%.
The local branch (Branch 2) has the strongest impact ($-7.90\%$), validating the EfficientViT-inspired design principle that local operations provide focused information complementing the diffuse patterns from linear attention.
The global spectral branch (Branch 3, $-4.29\%$) confirms that long-range dependencies are essential, while the feature branch (Branch 1, $-5.04\%$) provides complementary signal beyond structural information.
Overall, these results demonstrate that the Multi-Scale GIST effectively integrates complementary information sources.

\begin{table}[ht]
	\centering
	\caption{Ablation study on PPI dataset showing the contribution of each branch of the Multi-Scale GIST (see Figure \ref{fig:diagram}, right panel). Test accuracy is reported as percentage of baseline performance. Results averaged over 10 seeds with standard deviations indicated as uncertainty intervals.}
	\label{tab:ablation}
	\begin{tabular}{lcr}
		\toprule
		Ablation & Test Accuracy (\% baseline) & $\Delta$ (\%) \\
		\midrule
		No ablation & 100.0 & 0.0 \\
		Branch 1 (feature) & $95.0 \pm 2.8$ & $-5.0$ \\
		Branch 2 (local) & $92.1 \pm 1.8$ & $-7.9$ \\
		Branch 3 (global) & $95.7 \pm 3.7$ & $-4.3$ \\
		\bottomrule
	\end{tabular}
\end{table}

\subsection{Pronunciation}

GIST is pronounced with a soft G, like \textit{gist} (the English word for the central point or essence of something), not with a hard G as in \textit{gift}.
Although insisting on a hard G won't necessarily be considered wrong.
After all, that is just a different gauge choice.

\end{document}